\declaretheorem[name=Assumption]{assumption}
\declaretheorem[name=Definition]{definition}
\declaretheorem[name=Lemma]{lemma}
\declaretheorem[name=Proposition]{proposition}
\declaretheorem[name=Theorem]{theorem}
\declaretheorem[name=Remark]{remark}
\title{Asymptotic Theory and Phase Transitions for Variable Importance in Quantile Regression Forests}
\author{Tomoshige Nakamura \thanks{Juntendo University} ,  Hiroshi Shiraishi \thanks{Keio University}}
\date{\today}
\begin{document}
\maketitle

\begin{abstract}
Quantile Regression Forests (QRF) are widely used for non-parametric conditional quantile estimation, yet statistical inference for variable importance measures remains challenging due to the non-smoothness of the loss function and the complex bias-variance trade-off. In this paper, we develop a asymptotic theory for variable importance defined as the difference in pinball loss risks. We first establish the asymptotic normality of the QRF estimator by handling the non-differentiable pinball loss via Knight's identity. Second, we uncover a "phase transition" phenomenon governed by the subsampling rate $\beta$ (where $s \asymp n^{\beta}$). We prove that in the bias-dominated regime ($\beta \ge 1/2$), which corresponds to large subsample sizes typically favored in practice to maximize predictive accuracy, standard inference breaks down as the estimator converges to a deterministic bias constant rather than a zero-mean normal distribution. Finally, we derive the explicit analytic form of this asymptotic bias and discuss the theoretical feasibility of restoring valid inference via analytic bias correction. Our results highlight a fundamental trade-off between predictive performance and inferential validity, providing a theoretical foundation for understanding the intrinsic limitations of random forest inference in high-dimensional settings.
\end{abstract}

\section{Introduction}

Quantile regression plays a pivotal role in modern statistical analysis, offering a comprehensive view of the conditional distribution of a response variable beyond the conditional mean. In fields ranging from financial risk management to environmental science, understanding the tail behavior and heterogeneity of the outcome distribution is often more critical than predicting central trend. To estimate conditional quantiles in high-dimensional and complex settings, non-parametric machine learning methods have gained immense popularity. Among them, Quantile Regression Forests (QRF), introduced by \citep{Meinshausen2006} and theoretically refined by \citep{Athey2019}, stand out for their adaptability to local structures and robustness against the curse of dimensionality.

The practical utility of Quantile Regression Forests (QRF) has been demonstrated across a diverse range of disciplines, particularly where quantifying uncertainty and managing risk are as vital as point prediction. In precision medicine, QRF is employed to predict drug responses and assess prediction reliability, enabling more personalized therapeutic strategies \citep{fang2018quantile}. In climate science, it serves as a powerful tool for forecasting extreme weather events, such as heat waves, by modeling the conditional quantiles of temperature distributions \citep{khan2019prediction}. Furthermore, in the building energy sector, QRF is utilized to predict the uncertainty of chiller plant power consumption, providing confidence intervals that are crucial for efficient demand response and energy management \citep{rizi2023predicting}. Similarly, in quantitative finance, QRF models have been developed to forecast financial risk measures like Value-at-Risk (VaR) and Expected Shortfall (ES), effectively capturing non-linear dependencies and variable interactions in high-dimensional market data \citep{james2023forecasting}.

Despite their predictive success, a significant challenge remains: interpretability. The black-box nature of random forests obscures the contribution of individual covariates to the estimated quantiles. In decision-making contexts, it is crucial not only to predict risk but also to identify which factors drive it. While various heuristic measures of variable importance (VI) exist—such as permutation importance \citep{Breiman2001} or mean decrease impurity—they often lack statistical properties, making it difficult to construct valid confidence intervals or conduct hypothesis tests.

Recent advances in semi-parametric theory, notably by \citep{Williamson2023}, have established a general framework for algorithm-agnostic variable importance inference. Their approach typically relies on the efficiency of cross-fitting and the assumption that the nuisance estimators (the machine learning models) converge sufficiently fast—specifically, faster than $n^{-1/4}$. While this condition holds for parametric models or low-dimensional smooth functions, it is often violated by non-parametric estimators like random forests in high-dimensional settings. As the dimensionality increases, the convergence rate of the forest deteriorates, causing the bias inherent in the estimator to dominate the sampling fluctuation. Consequently, standard inference based on asymptotic normality breaks down, leading to invalid confidence intervals and inflated Type I errors.

In this paper, we develop a asymptotic theory for variable importance in quantile regression forests, explicitly addressing the challenges posed by non-smooth loss functions and slow convergence rates. Our contribution is threefold.

First, we establish the asymptotic normality of the QRF estimator and derive the explicit form of limitting distribution. Unlike standard regression forests, the QRF minimizes the non-differentiable pinball loss. To handle this non-smoothness, we employ Knight's identity \citep{Knight1998}, extending the analysis of honest forests by \citep{Wager2018, Athey2019} to the quantile setting. We derive the explicit convergence rate of the estimator as a function of the subsample size $s \asymp n^\beta$.

Second, we uncover a phase transition in the asymptotic behavior of the variable importance estimator governed by the subsampling rate $\beta$. We define variable importance as the difference in population risk (pinball loss) between a full model and a restricted model excluding the variable of interest. We show that:
\begin{itemize}
    \item In the variance-dominated regime ($\beta < 1/2$), the nuisance estimation error is negligible, and the VI estimator achieves standard $\sqrt{n}$-consistency and asymptotic normality.
    \item In the bias-dominated regime ($\beta \geq 1/2$), which corresponds to the large subsample sizes typically favored in practice to maximize accuracy, the bias of the QRF estimator decays too slowly. The VI estimator converges to a deterministic bias constant rather than zero, invalidating standard inference.
\end{itemize}

Third, based on our derivation of the explicit bias term, we discuss the theoretical implications for bias correction. We show that the deterministic bias characteristic of random forests in the regime $\beta > 1/2$ can be explicitly characterized using the forest's adaptive weights. This theoretical result suggests that while standard inference fails for large subsamples, asymptotic normality is recoverably in principle through analytic bias correction, provided that the nuisance components are estimated consistently.

The remainder of this paper is organized as follows. Section 2 defines the problem setting and the variable importance measure. Section 3 establishes the convergence rates and asymptotic normality of the QRF estimator. Section 4 presents the main theoretical results on variable importance, including the phase transition phenomenon and the derivation of the asymptotic bias. Section 5 concludes with a discussion on the practical implications of our theoretical findings.

%
%

\section{Preliminaries and Definitions}
\label{sec:preliminaries}

In this paper, we are interested in the conditional $\tau$-quantile of a response variable $Y \in \mathbb{R}$ given a set of covariates $X = (X_1, \dots, X_p)^\top \in \mathcal{X} \subset \mathbb{R}^p$. Let $Z = (X, Y)$ be a random vector, and we observe $n$ independent and identically distributed (i.i.d.) copies $Z_1, \dots, Z_n$.
For a quantile level $\tau \in (0, 1)$, we define the pinball loss function $\rho_{\tau}(u)$ and its associated score function $\psi_{\tau}(u)$ as:
\begin{align*}
    \rho_{\tau}(u) &:= u (\tau - \bm{1}\{u < 0\}), \\
    \psi_{\tau}(y, \theta) &:= \tau - \bm{1}\{y \le \theta\}.
\end{align*}

\subsection{Conditional Quantile Functions}

The conditional $\tau$-quantile function $q_{\tau}: \mathbb{R}^p \to \mathbb{R}$ is defined as the minimizer of the expected pinball loss:
\begin{equation*}
    q_{\tau}(x) \in \mathop{\mathrm{argmin}}_{\theta \in \mathbb{R}} \mathbb{E}[\rho_{\tau}(Y - \theta) \mid X = x].
\end{equation*}
For any subset of indices $S \subset \{1, \dots, p\}$, let $X_S = (X_j)_{j \in S}$ denote the sub-vector of covariates included in the model, and let $X_{-S}$ denote the complement vector corresponding to the indices $-S := \{1, \dots, p\} \setminus S$. We define the set of measurable functions depending only on $X_{-S}$ as $\mathcal{G}_{-S} := \{g: \mathbb{R}^p \to \mathbb{R} \mid g(x) = \tilde{g}(x_{-S}) \text{ for some } \tilde{g}: \mathbb{R}^{p-|S|} \to \mathbb{R}\}$.
The restricted conditional quantile function $q_{\tau, -S}(x)$, which does not utilize the information in $X_S$, is defined as:
\begin{equation*}
    q_{\tau, -S}(x) \in \mathop{\mathrm{argmin}}_{g \in \mathcal{G}_{-S}} \mathbb{E}[\rho_{\tau}(Y - g(X))].
\end{equation*}
Note that $q_{\tau, -S}(x)$ effectively depends only on $x_{-S}$.

\subsection{Variable Importance Measure for Quantiles}

We quantify the variable importance of the feature subset $X_S$ as the increase in population risk when $X_S$ is excluded from the model. The population risk functional $\mathcal{R}_{\tau}(f)$ for a prediction function $f$ is defined as:
\begin{equation*}
    \mathcal{R}_{\tau}(f) := \mathbb{E}[\rho_{\tau}(Y - f(X))].
\end{equation*}

\begin{definition}[Variable Importance]
The global variable importance of the subset $S$ at quantile level $\tau$, denoted by $V_{\tau}(S)$, is defined as the excess risk incurred by the restricted model relative to the full model:
\begin{equation*}
    V_{\tau}(S) := \mathcal{R}_{\tau}(q_{\tau, -S}) - \mathcal{R}_{\tau}(q_{\tau}) = \mathbb{E}[\rho_{\tau}(Y - q_{\tau, -S}(X)) - \rho_{\tau}(Y - q_{\tau}(X))].
\end{equation*}
\end{definition}

We also define the local variable importance at a point $x$ as:
\begin{equation*}
    v_{\tau}(S; x) := \mathbb{E}[\rho_{\tau}(Y - q_{\tau, -S}(X)) - \rho_{\tau}(Y - q_{\tau}(X)) \mid X = x].
\end{equation*}
By the law of iterated expectations, the global importance is the expectation of the local importance: $V_{\tau}(S) = \mathbb{E}[v_{\tau}(S; X)]$.
The variable importance measure defined above satisfies the following natural properties.

\begin{proposition}[Properties of Variable Importance]
\noindent
\begin{enumerate}
    \item[(i)] Non-negativity: $V_{\tau}(S) \ge 0$, since $q_{\tau}$ minimizes the global risk over a larger function space than $q_{\tau, -S}$.
    \item[(ii)] Normalization: If $S = \emptyset$, then $V_{\tau}(\emptyset) = 0$.
    \item[(iii)] Monotonicity: If $S \subset T$, then $V_{\tau}(S) \le V_{\tau}(T)$, as removing more variables cannot decrease the risk.
\end{enumerate}
\end{proposition}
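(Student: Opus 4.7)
The proposition is essentially a bookkeeping exercise about how the minimum of the pinball risk behaves when the class of admissible functions shrinks or grows, so the plan is to set up a single unifying observation and then read off each of the three claims from it. The key fact I would state first is the following: for any two classes of measurable functions $\mathcal{F}_1 \subset \mathcal{F}_2$ (both containing a minimizer), we have $\inf_{f \in \mathcal{F}_2} \mathcal{R}_\tau(f) \le \inf_{f \in \mathcal{F}_1} \mathcal{R}_\tau(f)$, because any candidate in $\mathcal{F}_1$ is also a candidate in $\mathcal{F}_2$. The entire proposition then follows from identifying the relevant pair of classes in each case.

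For (i), I would note that by the law of iterated expectations $\mathcal{R}_\tau(f) = \mathbb{E}[\mathbb{E}[\rho_\tau(Y - f(X)) \mid X]]$, so pointwise minimization of the inner conditional expectation shows that $q_\tau$ minimizes $\mathcal{R}_\tau$ over \emph{all} measurable functions $f : \mathbb{R}^p \to \mathbb{R}$. Since $\mathcal{G}_{-S}$ is contained in the space of all measurable functions, the general inequality above applied with $\mathcal{F}_1 = \mathcal{G}_{-S}$ and $\mathcal{F}_2$ equal to all measurable functions yields $\mathcal{R}_\tau(q_\tau) \le \mathcal{R}_\tau(q_{\tau,-S})$, i.e.\ $V_\tau(S) \ge 0$.

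For (ii), I would observe that when $S = \emptyset$, the index set $-S = \{1,\dots,p\}$, so $\mathcal{G}_{-\emptyset}$ equals the full class of measurable functions of $x$. Consequently $q_\tau \in \mathcal{G}_{-\emptyset}$ is itself a minimizer of the restricted problem, and $\mathcal{R}_\tau(q_{\tau,-\emptyset}) = \mathcal{R}_\tau(q_\tau)$, giving $V_\tau(\emptyset) = 0$. For (iii), I would use $S \subset T \Rightarrow -T \subset -S$, which implies $\mathcal{G}_{-T} \subset \mathcal{G}_{-S}$: any function depending only on $X_{-T}$ also depends only on $X_{-S}$. Applying the general inequality with $\mathcal{F}_1 = \mathcal{G}_{-T}$ and $\mathcal{F}_2 = \mathcal{G}_{-S}$ gives $\mathcal{R}_\tau(q_{\tau,-S}) \le \mathcal{R}_\tau(q_{\tau,-T})$, and subtracting $\mathcal{R}_\tau(q_\tau)$ from both sides yields $V_\tau(S) \le V_\tau(T)$.

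The only mildly non-trivial point—and it is more a matter of care than a real obstacle—is justifying that $q_\tau$ genuinely minimizes the unconditional risk over all measurable functions, and that the minimizers defining $q_{\tau,-S}$ exist (which I would note holds under standard integrability conditions, e.g.\ $\mathbb{E}|Y| < \infty$, with ties resolved by taking any measurable selection). No convexity of the classes is required and no smoothness of $\rho_\tau$ is used; the entire argument is a monotonicity-of-infima argument driven by the inclusions $\mathcal{G}_{-T} \subset \mathcal{G}_{-S} \subset \mathcal{G}_{-\emptyset}$.
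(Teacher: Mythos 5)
Your proof is correct and follows exactly the reasoning the paper sketches inline (the nestedness-of-infima argument with $\mathcal{G}_{-T} \subset \mathcal{G}_{-S} \subset$ (all measurable functions), plus pointwise minimization of the conditional risk to show $q_\tau$ is a global minimizer); the paper gives no separate appendix proof, so your write-up simply fills in the details the authors treat as immediate.
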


\subsection{Neyman Orthogonality}

The variable importance $V_{\tau}(S)$ is a functional of the nuisance parameters $q_{\tau}$ and $q_{\tau, -S}$. Since these functions must be estimated from data, it is crucial to understand how their estimation errors affect the estimation of $V_{\tau}(S)$. Here, we show that the risk functional $\mathcal{R}_{\tau}$ satisfies the Neyman orthogonality condition, implying that $V_{\tau}(S)$ is robust to first-order perturbations in the nuisance functions.

We impose the following regularity conditions on the data generating process.

\begin{assumption}[Regularity Conditions] \label{ass:regularity}
\noindent
\begin{enumerate}
    \item[(i)] The response variable $Y$ has a bounded first moment: $\mathbb{E}[|Y|] < \infty$.
    \item[(ii)] For each $x$, the $\tau$-quantile $q_{\tau}(x)$ and the restricted $\tau$-quantile $q_{\tau, -S}(x)$ are uniquely defined.
    \item[(iii)] There exist constants $r_0 > 0$, $c_0 > 0$, $L < \infty$, and $\eta > 0$ such that for all $x$ and for $\theta \in \{q_{\tau}(x), q_{\tau, -S}(x)\}$:
    \begin{enumerate}
        \item[(a)] (Positivity) The conditional density satisfies $f_{Y|X}(\theta \mid x) \ge c_0$.
        \item[(b)] (Local Hölder Continuity) For all $t$ with $|t| \le r_0$, $|f_{Y|X}(\theta + t \mid x) - f_{Y|X}(\theta \mid x)| \le L |t|^{\eta}$.
    \end{enumerate}
\end{enumerate}
\end{assumption}

To establish orthogonality, we utilize Knight's identity \citep{Knight1998} and the Gateaux derivative of the risk functional.

\begin{lemma}[Knight's Identity] \label{lemma:knight}
For any $u, v \in \mathbb{R}$, the following identity holds:
\begin{equation*}
    \rho_{\tau}(u - v) - \rho_{\tau}(u) = -v \psi_{\tau}(u) + \int_{0}^{v} (\bm{1}\{u \le s\} - \bm{1}\{u \le 0\}) ds.
\end{equation*}
\end{lemma}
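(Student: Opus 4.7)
The plan is to treat both sides of the claimed identity as functions of $v$ with $u$ held fixed, and show that they agree at $v=0$ and have the same almost-everywhere derivative in $v$. Concretely, I would define $F(v) := \rho_{\tau}(u-v) - \rho_{\tau}(u)$ and $G(v) := -v\psi_{\tau}(u) + \int_{0}^{v}(\bm{1}\{u \le s\} - \bm{1}\{u \le 0\})\,ds$, and observe that $F(0) = G(0) = 0$ and that both functions are absolutely continuous in $v$ (the first because $\rho_{\tau}$ is piecewise linear and hence Lipschitz; the second because it is linear plus an integral of a bounded function).

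Next I would compute the two derivatives separately. Since $\rho_{\tau}$ has slope $\tau$ on $(0,\infty)$ and slope $\tau-1$ on $(-\infty,0)$, the chain rule yields $F'(v) = -(\tau - \bm{1}\{u-v<0\}) = -\tau + \bm{1}\{u<v\}$ for every $v \ne u$. For $G$, the fundamental theorem of calculus gives $G'(v) = -\psi_{\tau}(u) + \bm{1}\{u \le v\} - \bm{1}\{u \le 0\}$ for almost every $v$, and substituting $\psi_{\tau}(u) = \tau - \bm{1}\{u \le 0\}$ simplifies this to $-\tau + \bm{1}\{u \le v\}$. Thus $F'(v) = G'(v)$ off the measure-zero set $\{v = u\}$, and since $F$ and $G$ are both absolutely continuous with a common value at $v=0$, integrating from $0$ to $v$ (with the convention $\int_0^v = -\int_v^0$ for $v<0$) yields $F(v) = G(v)$ for every $v \in \mathbb{R}$.

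I do not anticipate any substantive obstacle. The only minor technical points are (i) the convention $\bm{1}\{u<v\}$ versus $\bm{1}\{u \le v\}$ at the kink of $\rho_{\tau}$, which is harmless because these agree except at $v = u$ and continuity of both $F$ and $G$ in $v$ upgrades a.e.\ equality to pointwise equality, and (ii) the sign of $v$, which the derivative argument handles symmetrically. In essence, Knight's identity is the statement that the piecewise-linear increment $\rho_{\tau}(u-v) - \rho_{\tau}(u)$ can be recovered by integrating its piecewise-constant derivative, with the linear term $-v\psi_{\tau}(u)$ absorbing the ``baseline'' slope and the remainder integral encoding the correction whenever the argument crosses the origin.
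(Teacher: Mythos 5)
Your proof is correct and is essentially the paper's argument, just reorganized: the paper computes $\rho_{\tau}(u-v)-\rho_{\tau}(u)=\int_0^v(\bm{1}\{u\le s\}-\tau)\,ds$ via the fundamental theorem of calculus and then subtracts the integral representation $-v\psi_{\tau}(u)=\int_0^v(\bm{1}\{u\le 0\}-\tau)\,ds$, whereas you compare the two sides directly by checking agreement at $v=0$ and a.e.\ equality of $v$-derivatives, then invoke absolute continuity; both hinge on the same facts (Lipschitz/AC of $\rho_{\tau}$, $\rho_{\tau}'=\psi_{\tau}$ a.e., FTC).
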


\begin{lemma}[Gateaux Derivative of Risk] \label{lemma:gateaux}
Under Assumption \ref{ass:regularity}, for any measurable functions $f$ and $h$, the Gateaux derivative of the risk functional is given by:
\begin{equation*}
    D\mathcal{R}_{\tau}(f)[h] := \frac{d}{dt} \mathbb{E}[\rho_{\tau}(Y - (f + th)(X))] \bigg|_{t=0} = -\mathbb{E}[\psi_{\tau}(Y, f(X)) h(X)].
\end{equation*}
\end{lemma}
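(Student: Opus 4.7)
The plan is to reduce the Gateaux derivative computation to a pointwise expansion via Knight's identity (Lemma~\ref{lemma:knight}), and then to control the resulting remainder term by conditioning on $X$ and invoking dominated convergence.

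First, I would apply Lemma~\ref{lemma:knight} with $u = Y - f(X)$ and $v = t\, h(X)$. Noting that $\tau - \bm{1}\{Y - f(X) \le 0\} = \psi_{\tau}(Y, f(X))$, this produces the pathwise decomposition
\[
\rho_{\tau}(Y - f(X) - t h(X)) - \rho_{\tau}(Y - f(X)) = -t\, h(X)\, \psi_{\tau}(Y, f(X)) + R_t(X, Y),
\]
where
\[
R_t(X,Y) := \int_{0}^{t h(X)} \bigl( \bm{1}\{Y - f(X) \le s\} - \bm{1}\{Y - f(X) \le 0\} \bigr)\, ds.
\]
Taking expectations, dividing by $t \neq 0$, and letting $t \to 0$, the claim reduces to proving that $t^{-1}\, \mathbb{E}[R_t(X,Y)] \to 0$.

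Second, I would analyse the remainder by conditioning on $X = x$. Writing $F(\cdot \mid x) := F_{Y \mid X}(\cdot \mid x)$ for the conditional CDF and applying Fubini,
\[
\mathbb{E}[R_t \mid X = x] = \int_{0}^{t h(x)} \bigl[ F(f(x) + s \mid x) - F(f(x) \mid x) \bigr]\, ds,
\]
and the change of variables $s = t r$ yields
\[
\frac{1}{t}\, \mathbb{E}[R_t \mid X = x] = \int_{0}^{h(x)} \bigl[ F(f(x) + t r \mid x) - F(f(x) \mid x) \bigr]\, dr.
\]
Because Assumption~\ref{ass:regularity}(iii) guarantees that $Y \mid X = x$ admits a conditional density, $F(\cdot \mid x)$ is continuous at $f(x)$, so the integrand above converges pointwise to $0$ as $t \to 0$ for each $x$.

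Finally, to pass this pointwise convergence through the outer expectation, I would invoke the dominated convergence theorem. The key observation is that $|F(f(x)+tr\mid x) - F(f(x)\mid x)| \le 1$ holds uniformly in $t$ and $r$, so the inner integral is dominated by the integrable envelope $|h(x)|$ (under the mild integrability implicit in asking for the Gateaux derivative to be well-defined). Hence $t^{-1}\mathbb{E}[R_t] \to 0$, and the derivative equals $-\mathbb{E}[\psi_{\tau}(Y, f(X))\, h(X)]$, as claimed. The main obstacle is precisely this interchange of limit and expectation; fortunately, the trivial indicator bound furnishes a dominating function independent of $t$, so the finer Hölder control of Assumption~\ref{ass:regularity}(iii)(b) is not required at this step (it will be invoked later, when extracting the leading quadratic term in the analogous second-order expansion).
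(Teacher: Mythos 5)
Your argument is correct, and it follows a genuinely different route than the paper's. The paper proceeds directly: it invokes Proposition~\ref{prop:pinball_derivative} to get the a.e.\ pointwise derivative $\frac{d}{dt}\rho_{\tau}(Y-f(X)-th(X))|_{t=0} = -\psi_{\tau}(Y,f(X))h(X)$, then bounds the difference quotient by $|h(X)|$ via the Lipschitz constant $\max(\tau,1-\tau)$, and closes with one application of dominated convergence. You instead route everything through Knight's identity: you split off the linear term exactly, then show the normalized remainder $t^{-1}\mathbb{E}[R_t]$ vanishes by conditioning on $X$, Fubini, a change of variables $s = tr$, continuity of the conditional CDF at $f(x)$, and a nested DCT. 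Both proofs ultimately hinge on the same dominating function $|h(X)|$ and the same implicit requirement that $Y$ has no conditional atom at $f(x)$ (the paper needs this for the pointwise $t$-derivative to exist a.s., since the two one-sided derivatives of $\rho_{\tau}$ disagree at $0$; you need it for the pointwise convergence of the CDF difference in your inner integral). The paper's route is shorter; yours is more transparent about where continuity of $F_{Y\mid X}$ enters, avoids having to pin down $\psi_{\tau}(0)$, and mirrors exactly the decomposition reused in Lemma~\ref{lemma:second_order} for the second-order expansion, so it arguably unifies the first- and second-order analyses. One small caveat worth noting (it applies to both proofs): Assumption~\ref{ass:regularity}(iii) only controls the conditional density near $q_{\tau}(x)$ and $q_{\tau,-S}(x)$, whereas the lemma is stated for arbitrary measurable $f$; continuity of $F(\cdot\mid x)$ at a general $f(x)$ is an implicit additional regularity condition, though it is harmless since the lemma is only ever applied at $f \in \{q_{\tau}, q_{\tau,-S}\}$.
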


\begin{proposition}[Neyman Orthogonality] \label{prop:orthogonality}
Under Assumption \ref{ass:regularity}, the population risk functionals satisfy the first-order optimality conditions.
For the full model $q_{\tau}$, and any measurable function $h: \mathcal{X} \to \mathbb{R}$ with bounded moment:
\begin{equation*}
    D\mathcal{R}_{\tau}(q_{\tau})[h] = 0.
\end{equation*}
For the restricted model $q_{\tau, -S}$, and any measurable function $h \in \mathcal{G}_{-S}$ (depending only on $X_{-S}$):
\begin{equation*}
    D\mathcal{R}_{\tau}(q_{\tau, -S})[h] = 0.
\end{equation*}
Consequently, the variable importance functional $V_{\tau}(S)$ is orthogonal to the nuisance parameters in the sense that the first-order effects of estimation errors vanish.
\end{proposition}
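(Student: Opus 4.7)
The strategy is to combine the Gateaux derivative formula of Lemma~\ref{lemma:gateaux} with the pointwise first-order optimality of conditional $\tau$-quantiles, and then to lift this pointwise condition to global orthogonality via iterated expectations. For the full-model identity, Lemma~\ref{lemma:gateaux} gives $D\mathcal{R}_{\tau}(q_{\tau})[h] = -\mathbb{E}[\psi_{\tau}(Y, q_{\tau}(X))\, h(X)]$. Because $q_{\tau}(x)$ minimizes $\mathbb{E}[\rho_{\tau}(Y - \theta) \mid X = x]$, Assumption~\ref{ass:regularity}(ii) together with Assumption~\ref{ass:regularity}(iii)(a) ensures that the scalar first-order condition holds uniquely, yielding $\mathbb{P}(Y \le q_{\tau}(X) \mid X) = \tau$, i.e., $\mathbb{E}[\psi_{\tau}(Y, q_{\tau}(X)) \mid X] = 0$ almost surely. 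The tower property then eliminates any measurable test function with bounded moment, giving $\mathbb{E}[\psi_{\tau}(Y, q_{\tau}(X)) h(X)] = \mathbb{E}\bigl[\mathbb{E}[\psi_{\tau}(Y, q_{\tau}(X)) \mid X]\, h(X)\bigr] = 0$.

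For the restricted-model identity I would insert one intermediate step. Using iterated expectations, any $g \in \mathcal{G}_{-S}$ takes the form $g(x) = \tilde{g}(x_{-S})$, so $\mathbb{E}[\rho_{\tau}(Y - g(X))] = \mathbb{E}\bigl[\mathbb{E}[\rho_{\tau}(Y - \tilde{g}(X_{-S})) \mid X_{-S}]\bigr]$, and the pointwise (in $x_{-S}$) minimizer $q_{\tau, -S}(x) = \tilde{q}_{\tau,-S}(x_{-S})$ coincides with the conditional $\tau$-quantile of $Y$ given $X_{-S}$. The analogous scalar first-order condition gives $\mathbb{E}[\psi_{\tau}(Y, q_{\tau, -S}(X)) \mid X_{-S}] = 0$. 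Since any $h \in \mathcal{G}_{-S}$ is $\sigma(X_{-S})$-measurable, pulling $h(X)$ inside the inner expectation yields $\mathbb{E}[\psi_{\tau}(Y, q_{\tau, -S}(X))\, h(X)] = 0$, which is the required identity.

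Finally, to deduce Neyman orthogonality of $V_{\tau}(S)$ as a functional of the nuisance pair $(q_{\tau, -S}, q_{\tau})$, I would expand, by linearity of the Gateaux derivative,
\[
    D V_{\tau}(S)[(h_1, h_2)] = D\mathcal{R}_{\tau}(q_{\tau, -S})[h_2] - D\mathcal{R}_{\tau}(q_{\tau})[h_1],
\]
and take $h_1 = \widehat{q}_{\tau} - q_{\tau}$ and $h_2 = \widehat{q}_{\tau, -S} - q_{\tau, -S}$. The second perturbation lies in $\mathcal{G}_{-S}$ by construction (any sensible restricted estimator uses only $X_{-S}$), while the first is an arbitrary measurable perturbation, so both terms vanish by the two identities above, which formalizes the statement that first-order estimation errors do not contribute. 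The main subtlety I anticipate is justifying the interchange of differentiation and expectation that underlies Lemma~\ref{lemma:gateaux}: this is exactly where Assumption~\ref{ass:regularity}(iii) (positivity and local H\"older continuity of $f_{Y|X}$) does the work, via a dominated-convergence argument applied to the $\int_0^v$ remainder produced by Knight's identity (Lemma~\ref{lemma:knight}). Once that interchange is granted, the remainder of the argument is a clean application of conditional first-order conditions and the tower property.
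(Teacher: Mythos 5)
Your proposal is correct and follows essentially the same route as the paper: invoke Lemma~\ref{lemma:gateaux}, note the conditional first-order conditions $\mathbb{E}[\psi_{\tau}(Y, q_{\tau}(X)) \mid X] = 0$ and $\mathbb{E}[\psi_{\tau}(Y, q_{\tau,-S}(X)) \mid X_{-S}] = 0$, and kill the perturbation via iterated expectations, conditioning on $X$ in the full case and on $X_{-S}$ in the restricted case where $h$ is $\sigma(X_{-S})$-measurable. One small misattribution: the interchange of differentiation and expectation underlying Lemma~\ref{lemma:gateaux} is justified in the paper not by Assumption~\ref{ass:regularity}(iii) and the Knight remainder, but by the global Lipschitz property of $\rho_{\tau}$, which bounds the difference quotient by $|h(X)|$ directly; since that lemma is already established before this proposition, this does not affect the validity of your argument here.
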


\begin{remark}
    This orthogonality property is fundamental to our inference framework. It ensures that the error arising from estimating the nuisance functions $\hat{q}_{\tau}$ and $\hat{q}_{\tau, -S}$ affects the variable importance estimator only at a second-order magnitude. This robustness, combined with the sample-splitting strategy (cross-fitting), allows us to separate the variability of the variable importance estimator from the complexity of the nuisance estimation. Please see \cite{Foster2023} for detail.
\end{remark}

%
%
%

\section{Asymptotic Theory for Quantile Regression Forests}

In this section, we establish the asymptotic theory for Quantile Regression Forests (QRF), focusing on convergence rates and asymptotic normality. We adopt the generalized random forest framework of \cite{Athey2019}, utilizing honest trees and subsampling schemes.

\subsection{Estimator and Forest Construction}

Let $\mathcal{D}_{n} = \{Z_{1}, \dots, Z_{n}\}$ denote the observed data, where $Z_{i} = (X_{i}, Y_{i}) \in [0, 1]^{p} \times \mathbb{R}$. We define the QRF estimator $\hat{q}_{\tau}(x)$ as the minimizer of the weighted empirical risk:
\begin{equation*}
    \hat{q}_{\tau}(x) \in \operatorname*{argmin}_{\theta \in \mathbb{R}} M_{n,B}(\theta), \quad \text{where} \quad M_{n,B}(\theta) := \sum_{i=1}^{n} K_{n,B}(x, X_{i}, \mathcal{D}_{n}, \xi) \rho_{\tau}(Y_{i} - \theta).
\end{equation*}
Here, $K_{n,B}(x, X_{i}, \mathcal{D}_{n}, \xi)$ represents the data-adaptive weights generated by a random forest consisting of $B$ trees using the dataset $\mathcal{D}_{n}$, and $\xi$ captures the randomness inherent in the forest construction.

The forest comprises an ensemble of $B$ trees, indexed by $b=1, \dots, B$. Each tree is constructed using a random subsample $S_{b} \subset \{1, \dots, n\}$ of size $s=|S_{b}|$, drawn without replacement. The aggregate forest weight $K$ is defined as the average of the base kernels generated by the individual trees:
\begin{equation}
    K_{n,B}(x, X_{i}, \mathcal{D}_{n}, \xi) = \frac{1}{B} \sum_{b=1}^{B} \tilde{K}(x, X_{i}, Z_{S_{b}}, \xi_{b}),
    \label{eq:forest_weights}
\end{equation}
where $\xi_{b}$ denotes the randomness specific to the $b$-th tree, capturing factors such as variable selection at split nodes. Next, we describe the estimation of the base kernel.

\subsection{Base-Kernel Estimation and Honesty}

To ensure valid inference, we employ \textit{honest} trees. Specifically, for each tree $b$, the subsample $S_b$ is randomly split into two disjoint halves: $S_b^{tr}$ (for structure learning) and $S_b^{est}$ (for estimation), with $|S_b^{tr}| = |S_b^{est}| = s/2$.

\paragraph{Tree Structure Learning.}
The tree structure (partitions) is determined using only the data in $S_b^{tr}$. The recursive partitioning is performed greedily. At each node $P$, we seek a split that maximizes the heterogeneity of the conditional quantiles. Following the generalized random forest framework \citep{Athey2019}, this is achieved by maximizing a gradient-based criterion. Specifically, we compute pseudo-outcomes $\rho_i$ (e.g., the gradient of the pinball loss) for observations in $P \cap S_b^{tr}$, and select the split that maximizes the variance of these pseudo-outcomes in the child nodes $C_1, C_2$:
\begin{equation*}
    \Delta(C_1, C_2) = \frac{n_{C_1} n_{C_2}}{n_P^2} (\bar{\rho}_{C_1} - \bar{\rho}_{C_2})^2,
\end{equation*}
where $n_C$ and $\bar{\rho}_C$ denote the count and mean of the pseudo-outcomes in node $C$. This procedure ensures that the leaf nodes capture the local heterogeneity of the quantile function $q_{\tau}(x)$.

\paragraph{Weight Estimation.}
Once the tree is grown, let $L_b(x)$ denote the leaf node containing the target point $x$ in the $b$-th tree. The base kernel $\tilde{K}$ assigns weights only to observations in the estimation set $S_b^{est}$ that fall into the same leaf as $x$:
\begin{equation*}
    \tilde{K}(x, X_i, Z_{S_b}, \xi_b) = \frac{\bm{1}\{i \in S_b^{est}\} \bm{1}\{X_i \in L_b(x)\}}{| \{j \in S_b^{est} : X_j \in L_b(x)\} |}.
\end{equation*}
where $Z_{S_b} = \{Z_i\}_{i\in S_b}$. By construction, $\sum_{i=1}^n \tilde{K}(x, X_i, Z_{S_b}, \xi_b) = 1$, and  \\ consequently $\sum_{i=1}^n K_{n,B}(x, X_i, Z_{1:n}, \xi) = 1$.

\subsection{Assumptions}

We impose the following regularity conditions on the data and the forest algorithm.

\begin{assumption}[Data Regularity] \label{ass:Data_Regularity}

\noindent
\begin{itemize}
    \item[(i)] The covariates $X$ are distributed on a compact domain $\mathcal{X} = [0, 1]^p$ with a marginal density $f_X$ that is bounded away from zero and infinity.
    \item[(ii)] The conditional density of the response, $f_{Y|X}(y|x)$, is bounded and Lipschitz continuous with respect to both $y$ and $x$. Specifically, $f_{Y|X}(y|x) \ge c_0 > 0$ in a neighborhood of the true quantile $q_{\tau}(x)$.
\end{itemize}
\end{assumption}

\begin{assumption}[Honest and Regular Forest] \label{ass:Honest_and_Regular_Forest}

\noindent
\begin{itemize}
    \item[(i)] \textbf{Honesty}: For each tree, the subsample $S_b$ is randomly split into two disjoint sets $S_b^{tr}$ and $S_b^{est}$ of size $s/2$. The tree structure is grown using $S_b^{tr}$, and the leaf values are estimated using $S_b^{est}$.
    \item[(ii)] \textbf{Random-Split Regularity}:  The tree growing algorithm is valid in the sense of \citep{Wager2018}. Specifically, at each node, the probability that a split occurs along any feature $j$ is bounded from below, and the split point is chosen to ensure the child nodes contain a minimal fraction $\alpha$ of the parent's data.
    \item[(iii)] \textbf{Subsampling}:The subsample size $s$ satisfies $s \to \infty$ and $s/n \to 0$ as $n \to \infty$.
\end{itemize}
\end{assumption}

\subsection{Consistency and Rate of Convergence}

We first establish the consistency of the QRF estimator. The convergence rate depends on the trade-off between the approximation bias (governed by the leaf shrinkage rate) and the estimation variance.

\begin{theorem}[Consistency and Rate of Convergence]\label{th:qrf-consistency}
    Suppose Assumptions \ref{ass:Data_Regularity} and \ref{ass:Honest_and_Regular_Forest} hold. Let the expected leaf diameter shrink at a rate $\epsilon(s)$ (typically $O(s^{-\frac{1}{2\alpha p}})$ as shown in Lemma B.1). If the number of trees satisfies $B \gg n/s$ and the subsample size $s \asymp n^{\beta}$ is chosen such that the bias vanishes asymptotically ($n \epsilon(s)^2 \frac{s}{n} \to 0$), then:
    $$
        \hat{q}_{\tau}(x) \xrightarrow{p} q_{\tau}(x).
    $$
    Furthermore, the estimator satisfies the convergence rate:
    $$
        \hat{q}_{\tau}(x) - q_{\tau}(x) = O_p\left( \sqrt{\frac{s}{n}} + \epsilon(s) \right).
    $$
\end{theorem}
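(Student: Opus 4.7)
The plan is to convert the argmin problem into a Knight-identity linearization and then control the resulting score and curvature pieces using the honest-forest machinery of \citep{Wager2018, Athey2019}. Since $\theta \mapsto M_{n,B}(\theta)$ is convex, only one-sided control is required: an upper bound on the linearized score evaluated at $q_\tau(x)$, together with a local quadratic lower bound on the remainder. Consistency and the rate then follow from a standard convex M-estimation conclusion.

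Concretely, applying Lemma \ref{lemma:knight} to each $\rho_\tau(Y_i - \theta) - \rho_\tau(Y_i - q_\tau(x))$ with $u = Y_i - q_\tau(x)$, $v = \theta - q_\tau(x)$, and averaging against the weights $K_i := K_{n,B}(x, X_i, \mathcal{D}_n, \xi)$ (which sum to one), I obtain
\[
M_{n,B}(\theta) - M_{n,B}(q_\tau(x)) \;=\; -(\theta - q_\tau(x))\, W_n \;+\; R_n(\theta - q_\tau(x)),
\]
with the \emph{score} $W_n := \sum_i K_i\, \psi_\tau(Y_i, q_\tau(x))$ and the non-negative \emph{remainder} $R_n(\Delta) := \sum_i K_i \int_0^{\Delta} \bigl(\bm{1}\{Y_i \le q_\tau(x)+t\} - \bm{1}\{Y_i \le q_\tau(x)\}\bigr)\,dt$. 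I would bound $W_n$ and $R_n$ separately. For the score, decompose $\psi_\tau(Y_i, q_\tau(x)) = \psi_\tau(Y_i, q_\tau(X_i)) + [F_{Y|X}(q_\tau(x)\mid X_i) - \tau]$; by honesty (Assumption \ref{ass:Honest_and_Regular_Forest}(i)), the weights $K_i$ are independent of estimation-sample responses given the $X$-information, so the first summand has zero conditional mean and its variance is $O(s/n)$ via the incomplete U-statistic bound of \citep{Wager2018}. The drift summand $\sum_i K_i [F_{Y|X}(q_\tau(x)\mid X_i) - \tau]$ is controlled by Lipschitz continuity of $f_{Y|X}$ (which propagates to Lipschitz continuity of $q_\tau$ via positivity of the density) combined with the leaf-diameter estimate $\epsilon(s)$ from Lemma B.1, yielding an $O(\epsilon(s))$ bound. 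For the remainder, the positivity floor $c_0$ and the local regularity of $f_{Y|X}$ give $\mathbb{E}[R_n(\Delta)\mid X,\xi] \ge (c_0/2)\Delta^2 \,(1 - o(1))$ uniformly on a shrinking neighborhood $|\Delta| \le \delta_n$, which can be upgraded to a high-probability quadratic lower bound on $R_n$ itself by convexity together with an elementary bracketing argument (no full empirical-process chaining is needed, since the argmin is one-dimensional).

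Combining these, optimality of $\hat{q}_\tau(x)$ gives $M_{n,B}(\hat{q}_\tau(x)) \le M_{n,B}(q_\tau(x))$, hence $R_n(\hat{\Delta}) \le \hat{\Delta}\, W_n$ at $\hat{\Delta} := \hat{q}_\tau(x) - q_\tau(x)$. The quadratic lower bound on $R_n$ and the score bound then force $|\hat{\Delta}| \lesssim |W_n| = O_p(\sqrt{s/n} + \epsilon(s))$, delivering simultaneously consistency (both terms vanish under the stated rate conditions) and the claimed convergence rate.

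The main obstacle is obtaining the $O(s/n)$ variance bound for the stochastic component of $W_n$. The weights $K_i$ are highly data-adaptive---they depend on the subsample $S_b$ and on the splits grown from $S_b^{\mathrm{tr}}$---while $Y_i$ enters only through the non-smooth indicator $\bm{1}\{Y_i \le q_\tau(x)\}$. The rigorous route is to represent $W_n$ (after conditioning on the $X$-sigma-field) as an incomplete U-statistic of order $s$ and apply a Hoeffding/Efron--Stein decomposition, projecting onto single-observation contributions; the condition $B \gg n/s$ ensures that the Monte-Carlo error from sampling the $S_b$ is dominated by the intrinsic U-statistic variance. The density-regularity steps controlling the curvature and the drift term are comparatively routine once the Lipschitz structure of $q_\tau$ has been extracted from Assumption \ref{ass:Data_Regularity}.
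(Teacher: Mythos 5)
Your argument is correct in substance but takes a genuinely different route from the paper's. The paper proves the theorem in two stages: first consistency, via the decomposition $M_{n,B}-M = T_1 + T_2 + T_3$ (Monte-Carlo, sampling, and approximation bias), pointwise convergence of each term, and Pollard's Convexity Lemma to upgrade pointwise to uniform convergence on compacts; then the rate, via a Knight-identity linearization (the paper's Lemma C.3, which matches your $W_n / R_n$ split) fed into van der Vaart's abstract rate theorem with a linear modulus function. You collapse both stages into one: linearize $M_{n,B}(\theta)-M_{n,B}(q_\tau(x))$ via Knight, bound the score, obtain a local quadratic curvature lower bound on the remainder, and close with the one-line optimality inequality $R_n(\hat\Delta)\le\hat\Delta\,W_n$. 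This is more elementary — no Convexity Lemma, no abstract modulus machinery — and delivers consistency as a byproduct. Both routes rely on the same forest ingredients, namely the variance-scaling fact $\sum_i K_i^2 = O_p(s/n)$ (the paper's Lemma B.2; you cite \citep{Wager2018}) and the leaf-shrinkage rate $\epsilon(s)$ (Lemma B.1), so the forest-theoretic content coincides.

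Two points to tighten. First, the written decomposition $\psi_\tau(Y_i,q_\tau(x)) = \psi_\tau(Y_i,q_\tau(X_i)) + [F_{Y|X}(q_\tau(x)\mid X_i)-\tau]$ is not an identity: the left side involves $\bm{1}\{Y_i\le q_\tau(x)\}$ while the right side involves $\bm{1}\{Y_i\le q_\tau(X_i)\}$, and in fact their conditional means given $X_i$ have opposite signs. The decomposition you actually want is
\[
\psi_\tau(Y_i,q_\tau(x)) \;=\; \bigl[\psi_\tau(Y_i,q_\tau(x)) - \bigl(\tau - F_{Y|X}(q_\tau(x)\mid X_i)\bigr)\bigr] \;+\; \bigl[\tau - F_{Y|X}(q_\tau(x)\mid X_i)\bigr],
\]
where the first bracket is mean-zero conditional on $X_i$ and the second bracket is the drift, which is $O(|X_i-x|)$ because $F_{Y|X}(q_\tau(X_i)\mid X_i)=\tau$ and both $F_{Y|X}$ and $q_\tau$ are Lipschitz under Assumption~\ref{ass:Data_Regularity}. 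The conclusion is unchanged. Second, the promised ``elementary bracketing argument'' to convert the conditional-mean quadratic lower bound on $R_n$ into a high-probability one needs one quantitative step: the conditional variance of $R_n(\Delta)$ about $\mathbb{E}[R_n(\Delta)\mid\mathbf{X},\xi]$ is $O((s/n)\,|\Delta|^3)$ — exactly the computation the paper records in Lemma C.3 — so the stochastic fluctuation is $o_p(\Delta^2)$ only when $|\Delta|\gg s/n$, which does hold at the target scale $|\Delta|\asymp\sqrt{s/n}$ since $s/n\to0$. Once that is in place, a one-dimensional peeling over dyadic shells, together with the convexity of $R_n$ in $\Delta$ (the integrand is non-decreasing in $t$), closes the uniformity gap as you anticipated.
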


\begin{proof}
    See Appendix C. The proof relies on the convexity of the objective function and the uniform convergence of the empirical risk, utilizing the properties of forest weights established in Appendix B.
\end{proof}

\subsection{Asymptotic Normality of Quantile Regression Forest}

To facilitate the theoretical analysis, we introduce the infinite forest weight $K_{n,\infty}(x, X_i)$, defined as the expectation of the single-tree weight conditional on the data $\mathcal{D}_n$:
$$
    K_{n,\infty}(x, X_i) := \mathbb{E}_{\xi}\left[ K_{n,1}(x, X_i) \mid \mathcal{D}_n \right].
$$
This quantity corresponds to the kernel of the infinite-order U-statistic associated with the forest estimator.

We now derive the asymptotic distribution of $\hat{q}_{\tau}(x)$. Since the pinball loss is non-smooth, we employ Knight's identity to handle the objective function.

\begin{theorem}[Asymptotic Normality] \label{thm:normality}
Suppose Assumptions \ref{ass:Data_Regularity} and \ref{ass:Honest_and_Regular_Forest} hold. Assume the undersmoothing condition is satisfied, i.e., the bias decays faster than the standard deviation ($\sqrt{n/s} \epsilon(s) \to 0$). Then, the QRF estimator is asymptotically normal:
$$
    \sqrt{\frac{n}{s}} (\hat{q}_{\tau}(x) - q_{\tau}(x)) \xrightarrow{d} \mathcal{N}\left(0, \sigma_{\tau}^2(x)\right),
$$
where the asymptotic variance is given by:
$$
    \sigma_{\tau}^2(x) = \frac{\tau(1-\tau)\eta(x)}{f_{Y|X}(q_{\tau}(x)|x)^2}.
$$
Here, $\eta(x) = \text{plim}_{n \to \infty} \frac{n}{s} \sum_{i=1}^n K_{n,\infty}(x, X_i)^2$ is the variance scaling factor specific to the forest structure (see Lemma B.2).
\end{theorem}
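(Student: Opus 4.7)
The plan is to apply Knight's identity (Lemma \ref{lemma:knight}) to linearize the pinball loss around $q_\tau(x)$, analyze the resulting linear and quadratic pieces separately, and then extract the argmin asymptotics via a convex-minimization argument in the spirit of Hjort and Pollard (1993). Introduce the local parameter $t = \sqrt{n/s}\,(\theta - q_\tau(x))$ and the rescaled objective
\begin{equation*}
Z_n(t) := \frac{n}{s}\bigl[M_{n,B}(q_\tau(x) + t\sqrt{s/n}) - M_{n,B}(q_\tau(x))\bigr].
\end{equation*}
Writing $U_i = Y_i - q_\tau(x)$ and applying Lemma \ref{lemma:knight} with $v = t\sqrt{s/n}$ yields the decomposition $Z_n(t) = -t\,A_n + R_n(t)$, where
\begin{align*}
A_n &:= \sqrt{n/s}\sum_{i=1}^n K_{n,B}(x, X_i)\,\psi_\tau(Y_i, q_\tau(x)), \\
R_n(t) &:= \frac{n}{s}\sum_{i=1}^n K_{n,B}(x, X_i)\int_0^{t\sqrt{s/n}}\bigl(\bm{1}\{U_i \le s\} - \bm{1}\{U_i \le 0\}\bigr)\,ds.
\end{align*}
The target is $A_n \xrightarrow{d} \mathcal{N}(0, \tau(1-\tau)\eta(x))$ and $R_n(t) \xrightarrow{p} \tfrac{1}{2}f_{Y|X}(q_\tau(x)\mid x)\,t^2$ for each fixed $t$.

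The linear term $A_n$ is the principal obstacle. I would split the score as $\psi_\tau(Y_i, q_\tau(x)) = [\tau - F_{Y|X}(q_\tau(x)\mid X_i)] + [F_{Y|X}(q_\tau(x)\mid X_i) - \bm{1}\{Y_i \le q_\tau(x)\}]$. The first summand is $\{X_i\}$-measurable; since $K_{n,B}(x, X_i)$ is supported on leaves of diameter $O(\epsilon(s))$ and $F_{Y|X}(q_\tau(\cdot)\mid\cdot)$ is Lipschitz near $x$, this piece contributes a bias of order $\sqrt{n/s}\,\epsilon(s)$, which vanishes under the undersmoothing assumption. For the second summand, honesty (Assumption \ref{ass:Honest_and_Regular_Forest}(i)) is essential: it makes $K_{n,B}(x, X_i)$ conditionally independent of $Y_i$ given $X_i$, so the centered Bernoulli shocks behave as a clean weighted sum. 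Replacing $K_{n,B}$ by the infinite-forest kernel $K_{n,\infty}$ is legitimate because $B \gg n/s$ (Lemma B.2), and the resulting object is an incomplete U-statistic with subsamples of size $s$. The Hajek projection / ANOVA decomposition argument \citep{Wager2018, Athey2019}, built on subsampling without replacement and random-split regularity, then delivers asymptotic normality with variance $\tau(1-\tau)\cdot\mathrm{plim}_{n\to\infty}(n/s)\sum_i K_{n,\infty}(x,X_i)^2 = \tau(1-\tau)\eta(x)$.

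For the quadratic remainder, honesty again gives
\begin{equation*}
E[R_n(t)\mid\{X_i\},\xi] = \frac{n}{s}\sum_{i=1}^n K_{n,B}(x,X_i)\int_0^{t\sqrt{s/n}}\bigl[F_{Y|X}(q_\tau(x)+s\mid X_i) - F_{Y|X}(q_\tau(x)\mid X_i)\bigr]\,ds.
\end{equation*}
A first-order Taylor expansion of $F_{Y|X}$ at $q_\tau(x)$, combined with $\sum_i K_{n,B} = 1$ and the Hölder continuity of $f_{Y|X}$ (Assumption \ref{ass:Data_Regularity}), produces the deterministic limit $\tfrac{1}{2}f_{Y|X}(q_\tau(x)\mid x)\,t^2$, with residuals absorbed by the leaf-diameter bound. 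Concentration around this conditional mean follows from a routine second-moment computation: each indicator increment has variance of order $v = O(\sqrt{s/n})$ and the weights satisfy $\sum_i K_{n,B}^2 = O_p(s/n)$, so the fluctuation of $R_n(t)$ is $o_p(1)$.

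Finally, $\theta\mapsto M_{n,B}(\theta)$ is convex, so each realization of $Z_n(t)$ is convex in $t$. Combining the two pointwise limits, $Z_n(t) \xrightarrow{d} -tA + \tfrac{1}{2}f_{Y|X}(q_\tau(x)\mid x)\,t^2$ with $A \sim \mathcal{N}(0, \tau(1-\tau)\eta(x))$, which is strictly convex with unique minimizer $t^\star = A/f_{Y|X}(q_\tau(x)\mid x)$. The convexity lemma of Hjort and Pollard then transfers convergence in distribution of the objective to convergence in distribution of the argmin, yielding $\sqrt{n/s}\,(\hat q_\tau(x) - q_\tau(x)) \xrightarrow{d} \mathcal{N}\bigl(0,\; \tau(1-\tau)\eta(x)/f_{Y|X}(q_\tau(x)\mid x)^2\bigr)$. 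The delicate step throughout is the U-statistic CLT for $A_n$: without honesty, $K_{n,B}(x, X_i)$ would be correlated with $\psi_\tau(Y_i, q_\tau(x))$, introducing covariance terms that would destroy the clean $\tau(1-\tau)\eta(x)$ variance formula and obstruct the Hajek projection.
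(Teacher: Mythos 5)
Your proposal follows essentially the same route as the paper's Appendix D: localize with $u = \sqrt{n/s}(\theta - q_\tau(x))$, decompose the rescaled objective via Knight's identity into a linear score term and a quadratic integral remainder, establish asymptotic normality of the score through a generalized U-statistic CLT using honesty and the variance scaling of Lemma B.2, show the quadratic term converges to a deterministic parabola via a second-order CDF expansion, and invoke a convex argmin mapping theorem. Your explicit split of $\psi_\tau(Y_i, q_\tau(x))$ into the $\{X_i\}$-measurable bias piece $\tau - F_{Y|X}(q_\tau(x)\mid X_i)$ and the conditionally centered Bernoulli shock is a cleaner bookkeeping than the paper's $S_{n,B} - S_{n,\infty}$ / $S_{n,\infty} - \mathbb{E}[S_{n,\infty}]$ / $\mathbb{E}[S_{n,\infty}]$ decomposition, but the substance is identical.
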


\begin{proof}
    See Appendix D. The proof utilizes Knight's Identity to handle the non-smooth pinball loss and applies the Central Limit Theorem for generalized U-statistics [Peng et al., 2022] to the linear part of the objective function.
\end{proof}

%
%

\section{Asymptotic Analysis for Variable Importance}

In this section, we investigate the asymptotic properties of the plug-in estimator for the variable importance measure. We define the estimator based on the difference in empirical risks between the restricted and full models, utilizing a cross-fitting strategy to mitigate overfitting bias.

Let $\mathcal{D} = \{Z_1, \dots, Z_{2n}\}$ be the total dataset of size $2n$. We randomly split $\mathcal{D}$ into two disjoint sets: a training set $\mathcal{I}_{tr} = \{Z_{n+1}, \dots, Z_{2n}\}$ and an evaluation set $\mathcal{I}_{ev} = \{Z_1, \dots, Z_n\}$. The nuisance functions $\hat{q}_{\tau}$ and $\hat{q}_{\tau, -S}$ are estimated using only $\mathcal{I}_{tr}$. The variable importance estimator is then computed on $\mathcal{I}_{ev}$:
\begin{equation*}
    \hat{V}_{\tau}(S) := \frac{1}{n} \sum_{i \in \mathcal{I}_{ev}} \left\{ \rho_{\tau}(Y_i - \hat{q}_{\tau, -S}(X_i)) - \rho_{\tau}(Y_i - \hat{q}_{\tau}(X_i)) \right\}.
\end{equation*}
For notation simplicity, we re-index $\mathcal{I}_{ev}$ as $\{1, \dots, n\}$. Recall that the true variable importance is $V_{\tau}(S) = \mathcal{R}_{\tau}(q_{\tau, -S}) - \mathcal{R}_{\tau}(q_{\tau})$.

\subsection{Consistency and Rate of Convergence}

We first establish the consistency and the convergence rate of $\hat{V}_{\tau}(S)$. The rate depends crucially on the subsample size exponent $\beta$ (where $s \asymp n^\beta$), which governs the trade-off between the variance of the QRF estimator and the bias of the VI estimator.

\begin{theorem}[Consistency and Rate of Convergence] \label{thm:vi_consistency}
Suppose the assumptions of Theorem \ref{thm:normality} hold for both the full model estimator $\hat{q}_{\tau}$ and the restricted estimator $\hat{q}_{\tau, -S}$. Let the subsample size satisfy $s \asymp n^\beta$ with $(1 + 1/(\alpha p))^{-1} < \beta < 1$. Then, the estimator $\hat{V}_{\tau}(S)$ is consistent, and satisfies:
\begin{equation*}
    \hat{V}_{\tau}(S) - V_{\tau}(S) = O_p\left( n^{-1/2} + n^{\beta-1} \right).
\end{equation*}
\end{theorem}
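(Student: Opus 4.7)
The plan is to decompose $\hat V_\tau(S) - V_\tau(S)$ into an \emph{oracle} empirical-process part, which delivers the $n^{-1/2}$ rate, and a \emph{nuisance} part arising from substituting the QRF estimates for $q_\tau$ and $q_{\tau,-S}$, which delivers the $n^{\beta-1}$ rate. Concretely, I would write
\begin{equation*}
    \hat V_\tau(S) - V_\tau(S) = \bigl[\tilde V_\tau(S) - V_\tau(S)\bigr] + \bigl[\hat V_\tau(S) - \tilde V_\tau(S)\bigr],
\end{equation*}
where $\tilde V_\tau(S) := \frac{1}{n}\sum_{i \in \mathcal{I}_{ev}}\{\rho_\tau(Y_i - q_{\tau,-S}(X_i)) - \rho_\tau(Y_i - q_\tau(X_i))\}$. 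The oracle is a sum of i.i.d.\ random variables with mean $V_\tau(S)$ and finite second moment (since $\mathbb E|Y|<\infty$ and $|\rho_\tau(u)|\le|u|$), so the classical CLT gives $\tilde V_\tau(S) - V_\tau(S) = O_p(n^{-1/2})$.

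For the nuisance part, split it as $A - B$ where $A := \frac{1}{n}\sum_i[\rho_\tau(Y_i - \hat q_{\tau,-S}(X_i)) - \rho_\tau(Y_i - q_{\tau,-S}(X_i))]$ and similarly $B$ for the full model. To each summand apply Knight's identity (Lemma \ref{lemma:knight}) with $u = Y_i - q(X_i)$ and $v = \hat q(X_i) - q(X_i) =: \hat\Delta_i$. This yields a linear term
\begin{equation*}
    L := -\frac{1}{n}\sum_i \hat\Delta_i\,\psi_\tau\bigl(Y_i - q(X_i)\bigr)
\end{equation*}
and a quadratic remainder $R := \frac{1}{n}\sum_i \int_0^{\hat\Delta_i}\bigl[\bm 1\{Y_i - q(X_i) \le s\} - \bm 1\{Y_i - q(X_i) \le 0\}\bigr]\,ds$. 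By sample splitting, $\hat q$ is independent of $\mathcal{I}_{ev}$; combined with the first-order optimality $\mathbb E[\psi_\tau(Y - q_\tau(X))\mid X] = 0$ and its restricted analogue $\mathbb E[\psi_\tau(Y - q_{\tau,-S}(X))\mid X_{-S}] = 0$ (which pairs with the $\mathcal G_{-S}$-measurability of $\hat q_{\tau,-S}$), one obtains $\mathbb E[L\mid\mathcal I_{tr}] = 0$ and $\operatorname{Var}(L\mid\mathcal I_{tr}) \le n^{-1}\|\hat q - q\|_{L^2(P_X)}^2$.

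For the remainder, condition on $(X_i, \mathcal I_{tr})$ and Taylor-expand $F_{Y|X}(\cdot\mid X_i)$ around $q(X_i)$ using Assumption \ref{ass:regularity}(iii), producing $\frac{1}{2}f_{Y|X}(q(X_i)\mid X_i)\hat\Delta_i^2 + O(|\hat\Delta_i|^{2+\eta})$, so $R = O_p(\|\hat q - q\|_{L^2(P_X)}^2)$ after controlling the centred part by its variance. Theorem \ref{th:qrf-consistency} then gives $\|\hat q - q\|_{L^2}^2 = O_p(s/n + \epsilon(s)^2)$, and the stated hypothesis $\beta > (1 + 1/(\alpha p))^{-1}$ forces $\epsilon(s)^2 \lesssim s/n = n^{\beta-1}$, so $R = O_p(n^{\beta-1})$ while $L = O_p(n^{(\beta-2)/2}) = o(n^{\beta-1})$. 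Subtracting the two pinball increments preserves these orders, and combining with the oracle gives the claimed $O_p(n^{-1/2} + n^{\beta-1})$; consistency follows because $\beta < 1$.

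The main obstacle is upgrading Theorem \ref{th:qrf-consistency}, which is formulated pointwise, to the integrated $L^2(P_X)$ bound required above. I would obtain it by integrating the pointwise bias/variance decomposition from the proof of that theorem over the compact support $\mathcal X = [0,1]^p$ and invoking the uniform boundedness of the forest kernel together with Fubini. Once that integrated rate is in hand, the remainder is pure Neyman-orthogonal bookkeeping: Proposition \ref{prop:orthogonality} annihilates the first-order sensitivity of $V_\tau(S)$, leaving only the squared nuisance error, which is exactly $n^{\beta-1}$ in the declared regime.
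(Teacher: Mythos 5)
Your proposal mirrors the paper's argument step for step: the same oracle/nuisance decomposition, the same use of Knight's identity to split the nuisance error into a cross-fitting-killed linear score term and a quadratic remainder, and the same identification of the remainder with the integrated squared nuisance error $O_p(s/n)$. You are in fact more careful than the paper in two places it glosses over---you note explicitly that Theorem~\ref{th:qrf-consistency} is pointwise and must be upgraded to an $L^2(P_X)$ bound before the rate argument goes through, and you show that the otherwise-unexplained lower bound $\beta > (1 + 1/(\alpha p))^{-1}$ is exactly what makes $\epsilon(s)^2 \lesssim s/n$, so the squared bias is subsumed in $n^{\beta-1}$ rather than dominating it.
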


\subsection{Asymptotic Normality in the Bias-Free Regime}

When the subsample size is small enough ($\beta < 1/2$), the convergence of the nuisance estimators is faster than $n^{-1/4}$, rendering the remainder term $R_n$ asymptotically negligible compared to the parametric rate $n^{-1/2}$.

\begin{theorem}[Asymptotic Normality for $\beta < 1/2$] \label{thm:vi_normality_fast}
Suppose the conditions of Theorem \ref{thm:vi_consistency} hold. If $\beta < 1/2$, and assuming the variance $\sigma_S^2 := \text{Var}(v_{\tau}(S; X))$ is positive finite, then:
\begin{equation*}
    \sqrt{n} (\hat{V}_{\tau}(S) - V_{\tau}(S)) \xrightarrow{d} \mathcal{N}(0, \sigma_S^2).
\end{equation*}
\end{theorem}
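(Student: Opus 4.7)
The plan is to decompose $\hat V_\tau(S) - V_\tau(S)$ into an oracle piece built from the true nuisance functions plus a remainder capturing nuisance-estimation error, then to apply the Lindeberg--L\'evy CLT to the oracle piece and show that cross-fitting combined with Neyman orthogonality drives the remainder to $o_p(n^{-1/2})$ precisely when $\beta < 1/2$.

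Concretely, I would write
\begin{equation*}
    \hat V_\tau(S) - V_\tau(S) \;=\; T_n \;+\; R_n^{(-S)} - R_n^{(S)},
\end{equation*}
where $T_n := n^{-1}\sum_{i \in \mathcal{I}_{ev}} (U_i - V_\tau(S))$ with $U_i := \rho_\tau(Y_i - q_{\tau,-S}(X_i)) - \rho_\tau(Y_i - q_\tau(X_i))$, and $R_n^{(q)} := n^{-1}\sum_{i \in \mathcal{I}_{ev}} \{\rho_\tau(Y_i - \hat q(X_i)) - \rho_\tau(Y_i - q(X_i))\}$ for $q \in \{q_\tau,\, q_{\tau,-S}\}$. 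Sample-splitting makes $\hat q_\tau$ and $\hat q_{\tau,-S}$ independent of $\mathcal{I}_{ev}$, so conditioning on $\mathcal{I}_{tr}$ lets me treat $\hat q$ as a deterministic function when analysing the remainder.

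For the oracle term $T_n$, the summands $\{U_i\}$ are i.i.d. with mean $V_\tau(S)$; under Assumption \ref{ass:Data_Regularity} the covariates lie on $[0,1]^p$ and the quantile functions $q_\tau,\,q_{\tau,-S}$ are bounded, so $|U_i| \le C(|Y_i|+1)$ and a finite second moment follows from $\mathbb E|Y|<\infty$. The Lindeberg--L\'evy CLT then yields $\sqrt n\, T_n \xrightarrow{d} \mathcal N(0,\sigma_S^2)$ with the variance stated in the theorem. For the remainder, Knight's identity (Lemma \ref{lemma:knight}) applied with $\hat\delta_i := \hat q(X_i) - q(X_i)$ gives, for each $q \in \{q_\tau,\,q_{\tau,-S}\}$,
\begin{equation*}
    \rho_\tau(Y_i - \hat q(X_i)) - \rho_\tau(Y_i - q(X_i)) = -\hat\delta_i\,\psi_\tau(Y_i, q(X_i)) + \int_0^{\hat\delta_i}\bigl[\bm{1}\{Y_i - q(X_i) \le s\} - \bm{1}\{Y_i - q(X_i) \le 0\}\bigr]\,ds.
\end{equation*}
Conditional on $\mathcal{I}_{tr}$, the linear (score) term has mean zero by Neyman orthogonality (Proposition \ref{prop:orthogonality}, valid at $q_\tau$ for arbitrary perturbations and at $q_{\tau,-S}$ for perturbations in $\mathcal{G}_{-S}$, which covers $\hat q_{\tau,-S}-q_{\tau,-S} \in \mathcal{G}_{-S}$), and its conditional variance is $O_p(n^{-1}\mathbb E\hat\delta^2) = O_p(n^{\beta - 2})$ by Theorem \ref{thm:normality}. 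A Taylor expansion of $F_{Y|X}$ about $q(X_i)$ then extracts the dominant quadratic contribution of the integral, $(2n)^{-1}\sum_i f_{Y|X}(q(X_i)|X_i)\,\hat\delta_i^2 = O_p(s/n) = O_p(n^{\beta-1})$, leaving a mean-zero fluctuation with conditional variance $O_p(n^{-1}\mathbb E|\hat\delta|^3) = O_p(n^{-1+3(\beta-1)/2})$. Assembling, $R_n^{(-S)} - R_n^{(S)} = O_p(n^{\beta-1})$, so $\sqrt n\,(R_n^{(-S)}-R_n^{(S)}) = o_p(1)$ precisely under the hypothesis $\beta<1/2$, and Slutsky's theorem concludes.

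The hardest step is the sharp control of the integral (residual) term in Knight's identity. A crude bound $|\int_0^{\hat\delta_i}[\cdots]\,ds| \le |\hat\delta_i|$ combined with i.i.d. averaging yields only $O_p(\|\hat\delta\|_2/\sqrt n)$, which is insufficient once $\beta \ge 1/2$ and leaves no margin at the threshold. The delicate move is to peel off the dominant deterministic quadratic bias via a careful Taylor expansion of the conditional CDF---justified by the H\"older/Lipschitz smoothness in Assumption \ref{ass:Data_Regularity}---and to bound the remaining higher-order fluctuation through its $L^3$ behaviour, uniformly over the random function $\hat q$ drawn from the QRF class. Verifying that this fluctuation is genuinely $o_p(n^{-1/2})$ is the most technically demanding piece of the argument.
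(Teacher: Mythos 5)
Your proof is correct and follows essentially the same route as the paper's: decompose $\hat V_\tau(S) - V_\tau(S)$ into an oracle i.i.d.\ average (CLT) plus a nuisance remainder, show $\sqrt{n}R_n = o_p(1)$ when $\beta < 1/2$, and conclude by Slutsky. The paper's proof of this theorem is terser because it simply cites the rate $R_n = O_p(n^{\beta-1})$ already established in Theorem~\ref{thm:vi_consistency}, whereas you re-derive that rate inline via Knight's identity, cross-fitting, and the second-order Taylor expansion of the integral remainder -- a valid but redundant re-proof of material the hypothesis already grants you.
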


\begin{proof}
From Theorem \ref{thm:vi_consistency}, we have the decomposition:
\begin{equation*}
    \sqrt{n}(\hat{V}_{\tau}(S) - V_{\tau}(S)) = \sqrt{n} A_n + \sqrt{n} R_n.
\end{equation*}
The first term $\sqrt{n} A_n = \frac{1}{\sqrt{n}} \sum_{i=1}^n (v_{\tau}(S; Z_i) - V_{\tau}(S))$ converges to $\mathcal{N}(0, \sigma_S^2)$ by the standard Central Limit Theorem.
For the second term, we showed $R_n = O_p(n^{\beta-1})$. Thus:
\begin{equation*}
    \sqrt{n} R_n = O_p(n^{1/2} \cdot n^{\beta-1}) = O_p(n^{\beta - 1/2}).
\end{equation*}
Since $\beta < 1/2$, the exponent $\beta - 1/2 < 0$, implying $\sqrt{n} R_n \xrightarrow{P} 0$. The result follows by Slutsky's theorem.
\end{proof}

\subsection{Asymptotic Behavior in the Bias-Dominated Regime ($\beta > 1/2$)}

When the subsample size grows sufficiently large ($\beta > 1/2$), the bias induced by the variance of the nuisance estimators dominates the parametric fluctuations. In this regime, the estimator converges to a deterministic bias constant rather than a normal distribution centered at zero.

\begin{theorem}[Limiting Distribution for $\beta > 1/2$] \label{thm:vi_bias_regime}
Suppose Assumptions 1--5 hold. Let the subsample size satisfy $s \asymp n^\beta$ with $1/2 < \beta < 1$. Then, the scaled estimator converges in probability to a deterministic constant:
\begin{equation*}
    n^{1-\beta} (\hat{V}_{\tau}(S) - V_{\tau}(S)) \xrightarrow{P} C_{\tau}(S),
\end{equation*}
where the limiting bias constant $C_{\tau}(S)$ is given by:
\begin{equation*}
    \label{eq:limiting_bias}
    C_{\tau}(S) := \frac{\tau(1-\tau)}{2} \mathbb{E}\left[ \frac{\eta_{-S}(X)}{f_{Y|X_{-S}}(q_{\tau, -S}(X)|X_{-S})} - \frac{\eta(X)}{f_{Y|X}(q_{\tau}(X)|X)} \right].
\end{equation*}
Here, $\eta(x)$ and $\eta_{-S}(x)$ are the variance scaling functions for the full and restricted forests.
\end{theorem}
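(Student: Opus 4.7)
The plan is to refine the remainder analysis of Theorem \ref{thm:vi_consistency}: after scaling by $n^{1-\beta}$ the linear (oracle) term vanishes, and the quadratic piece from Knight's identity concentrates at a deterministic constant whose form is pinned down by the pointwise variance supplied by Theorem \ref{thm:normality}. Concretely, starting from the decomposition $\hat{V}_\tau(S) - V_\tau(S) = A_n + R_n$ used in the proof of Theorem \ref{thm:vi_consistency}, we have $\sqrt{n}\,A_n = O_p(1)$, and since $\beta > 1/2$, $n^{1-\beta} A_n = O_p(n^{1/2-\beta}) = o_p(1)$. Hence the entire limit is driven by $n^{1-\beta} R_n$, and it suffices to extract the leading-order behavior of $R_n$.

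Conditioning on the training fold $\mathcal{I}_{tr}$ so that $\hat{q}_\tau$ and $\hat{q}_{\tau,-S}$ are frozen, I apply Lemma \ref{lemma:knight} to each summand of $R_n$, writing $\rho_\tau(Y_i - \hat{q}_\tau(X_i)) - \rho_\tau(Y_i - q_\tau(X_i)) = L_i + Q_i$, with $L_i := -(\hat{q}_\tau - q_\tau)(X_i)\,\psi_\tau(Y_i, q_\tau(X_i))$ and $Q_i$ the Knight integral. Because $q_\tau(X_i)$ is the true conditional $\tau$-quantile, $\mathbb{E}[\psi_\tau(Y_i, q_\tau(X_i)) \mid X_i] = 0$, so $n^{-1}\sum_i L_i$ has conditional variance of order $n^{-1}\,\mathbb{E}[(\hat{q}_\tau - q_\tau)^2(X) \mid \mathcal{I}_{tr}] = O_p(s/n^2)$, contributing $O_p(\sqrt{s}/n) = o_p(n^{\beta-1})$ after scaling. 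The same reasoning handles the restricted nuisance: since $\hat{q}_{\tau,-S} - q_{\tau,-S} \in \mathcal{G}_{-S}$, taking the inner conditional expectation over $X_S \mid X_{-S}$ and using Proposition \ref{prop:orthogonality} kills the corresponding linear term in expectation.

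To identify the deterministic limit, I expand the Knight integrals. Using the local Hölder continuity of Assumption \ref{ass:regularity}(iii)(b) to Taylor expand the conditional CDF, a direct computation gives
\[ \mathbb{E}[Q_i \mid \mathcal{I}_{tr}, X_i] = \tfrac{1}{2}\, f_{Y|X}(q_\tau(X_i) \mid X_i)\,(\hat{q}_\tau(X_i) - q_\tau(X_i))^2 + O\!\bigl((\hat{q}_\tau - q_\tau)^{2+\eta}(X_i)\bigr), \]
and since $\operatorname{Var}(Q_i \mid \mathcal{I}_{tr}, X_i) = O((\hat{q}_\tau - q_\tau)^3(X_i))$, the empirical average $n^{-1}\sum_i Q_i$ concentrates at its $X$-expectation at a rate strictly faster than $n^{\beta-1}$. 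Taking a further expectation over $\mathcal{I}_{tr}$ and invoking the pointwise asymptotic variance $(s/n)\sigma_\tau^2(x)$ from Theorem \ref{thm:normality} yields
\[ \mathbb{E}\!\bigl[\tfrac{1}{2}\, f_{Y|X}(q_\tau(X) \mid X)\,(\hat{q}_\tau(X) - q_\tau(X))^2\bigr] = \tfrac{s\,\tau(1-\tau)}{2n}\,\mathbb{E}\!\left[\tfrac{\eta(X)}{f_{Y|X}(q_\tau(X) \mid X)}\right] + o(s/n). \]
The parallel computation for the restricted nuisance replaces the inner conditioning by $X_{-S}$: because $\hat{q}_{\tau,-S} - q_{\tau,-S}$ depends only on $X_{-S}$, the density $f_{Y|X}$ and variance factor $\eta$ collapse to their restricted counterparts $f_{Y|X_{-S}}$ and $\eta_{-S}$. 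Subtracting the two quadratic contributions and multiplying through by $n^{1-\beta} = n/s$ produces exactly $C_\tau(S)$.

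The main obstacle is upgrading the pointwise Gaussian limit of Theorem \ref{thm:normality} to convergence of the integrated second moment $(n/s)\,\mathbb{E}[(\hat{q}_\tau(X) - q_\tau(X))^2] \to \mathbb{E}[\sigma_\tau^2(X)]$, which demands uniform integrability of the scaled squared error across $x$. I would establish this by combining the honest-forest variance bound of Lemma B.2 (used in the proof of Theorem \ref{thm:normality}) with the undersmoothing condition so that the squared bias $\epsilon(s)^2$ is dominated by $s/n$ after integration, and by appealing to Assumption \ref{ass:Data_Regularity}(i) to bound $f_X$ uniformly on $[0,1]^p$. A secondary difficulty is propagating the concentration of $n^{-1}\sum_i Q_i$ through the cross-fitted design; this is essentially automatic from the independence of $\mathcal{I}_{tr}$ and $\mathcal{I}_{ev}$ together with the $O((\hat{q}_\tau - q_\tau)^3)$ conditional variance bound noted above.
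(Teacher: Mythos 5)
Your proposal follows essentially the same route as the paper's proof: the decomposition $\hat V_\tau(S)-V_\tau(S)=A_n+R_n$, killing $n^{1-\beta}A_n$ via $\beta>1/2$, Knight's identity to peel off a vanishing linear term from $R_n$, the second-order (H\"older) expansion of the Knight integral, and plugging in the pointwise asymptotic variance $(s/n)\sigma_\tau^2(x)$ from Theorem~\ref{thm:normality} together with a uniform-integrability step to convert the distributional limit into a moment limit. The one place where you add useful detail is in spelling out why, for the restricted nuisance, the inner expectation over $X_S\mid X_{-S}$ collapses $f_{Y|X}$ to $f_{Y|X_{-S}}$ and $\eta$ to $\eta_{-S}$ --- the paper leaves this to an ``analogously'' --- but the substance of the argument is identical.
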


\begin{remark}[Bias Correction]
When $\beta = 1/2$, both the empirical process term $A_n$ and the remainder term $R_n$ are of order $O_p(n^{-1/2})$. In this boundary case, or when $\beta > 1/2$, one can recover the $\sqrt{n}$-consistency by estimating the bias constant $C_{\tau}(S)$ and subtracting it from the estimator: $\tilde{V}_{\tau}(S) := \hat{V}_{\tau}(S) - \hat{C}_{\tau}(S)$. This highlights the necessity of bias correction for variable importance inference in high-dimensional settings where slower convergence rates (large subsamples) are required.
\end{remark}

\subsection{Theoretical Implications: Bias Correction in the Bias-Dominated Regime}
\label{sec:bias_correction}

In practical applications of random forests, particularly in high-dimensional settings, practitioners often employ large subsample sizes (e.g., $s \approx n$, corresponding to $\beta \approx 1$) to minimize variance and capture complex interaction effects. Our analysis in Theorem 5 reveals that this practice introduces a non-negligible bias of order $O(n^{\beta-1})$, which dominates the sampling fluctuation when $\beta > 1/2$.

While this result highlights a fundamental limitation of standard inference in the bias-dominated regime, the explicit characterization of the bias constant $C_{\tau}(S)$ in \eqref{eq:limiting_bias} opens a path for potential mitigation. In this subsection, we discuss the theoretical feasibility of an analytic bias correction strategy.

\subsubsection{Plug-in Estimator for the Bias Component}

The limiting bias constant derived in Theorem 5 is given by:
\begin{equation*}
    C_{\tau}(S) = \frac{\tau(1-\tau)}{2} \mathbb{E} \left[ \frac{\eta_{-S}(X)}{f_{Y|X_{-S}}(q_{\tau,-S}(X)|X_{-S})} - \frac{\eta(X)}{f_{Y|X}(q_{\tau}(X)|X)} \right].
\end{equation*}
To correct for this bias, one must construct consistent estimators for the variance scaling function $\eta(x)$ and the conditional density $f_{Y|X}(y|x)$. A natural approach is to leverage the forest structure itself to estimate these nuisances without training separate models.

\paragraph{Estimation of $\eta(x)$:}
Recall from Lemma \ref{lemma:variance_scaling} in Appendix that $\frac{n}{s}\sum K(x,X_{i})^{2} \xrightarrow{p} \eta(x)$. Thus, a plug-in estimator using the forest weights is readily available:
\begin{equation*}
    \hat{\eta}(X_{i}) := \frac{n}{s} \sum_{j=1}^{n} K(X_{i}, X_{j})^{2}.
\end{equation*}

\paragraph{Estimation of $f_{Y|X}(q_{\tau}(X)|X)$:}
Since the quantile regression forest implicitly approximates the conditional distribution, a local density estimator can be constructed using the forest weights. For a bandwidth $h > 0$, we may define:
\begin{equation*}
    \hat{f}(X_{i}) := \sum_{j=1}^{n} K(X_{i}, X_{j}) \cdot \frac{1}{h} \mathcal{K} \left( \frac{Y_{j} - \hat{q}_{\tau}(X_{i})}{h} \right),
\end{equation*}
where $\mathcal{K}(\cdot)$ is a standard kernel function. Using these components, a plug-in estimator for the bias constant can be formed as:
\begin{equation*}
    \hat{C}_{\tau}(S) := \frac{\tau(1-\tau)}{2n} \sum_{i=1}^{n} \left( \frac{\hat{\eta}_{-S}(X_{i})}{\hat{f}_{-S}(X_{i})} - \frac{\hat{\eta}(X_{i})}{\hat{f}(X_{i})} \right).
\end{equation*}

\subsubsection{Restoration of Asymptotic Normality}

We now define a bias-corrected variable importance estimator:
\begin{equation*}
    \tilde{V}_{\tau}(S) := \hat{V}_{\tau}(S) - n^{\beta-1} \hat{C}_{\tau}(S).
\end{equation*}
The following proposition formalizes the conditions under which this correction restores standard inference.

\begin{proposition}[Asymptotic Normality with Bias Correction]
\label{prop:AsymptoticNormalityBiasCorrection}
Suppose the assumptions of Theorem 5 hold. Let the subsample size satisfy $s \asymp n^{\beta}$ with $1/2 < \beta < \frac{1+\eta}{2+\eta}$, where $\eta$ is the H\"older exponent of the conditional density $f_{Y|X}$. Provided that the bias estimation error is sufficiently small, specifically $\hat{C}_{\tau}(S) - C_{\tau}(S) = o_{p}(n^{1/2-\beta})$, the bias-corrected estimator satisfies:
\begin{equation*}
    \sqrt{n}(\tilde{V}_{\tau}(S) - V_{\tau}(S)) \xrightarrow{d} \mathcal{N}(0, \sigma_{S}^{2}).
\end{equation*}
\end{proposition}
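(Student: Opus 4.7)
The plan is to build on the decomposition $\hat{V}_{\tau}(S) - V_{\tau}(S) = A_{n} + R_{n}$ established in Theorem \ref{thm:vi_consistency} and sharpened in the proof of Theorem \ref{thm:vi_bias_regime}. The empirical process term $A_{n} = \frac{1}{n} \sum_{i=1}^{n}(v_{\tau}(S; Z_{i}) - V_{\tau}(S))$ satisfies $\sqrt{n}\, A_{n} \xrightarrow{d} \mathcal{N}(0, \sigma_{S}^{2})$ by the classical CLT, with the Neyman orthogonality of Proposition \ref{prop:orthogonality} ensuring that the first-order effect of the nuisances is absent. The key refinement needed beyond Theorem \ref{thm:vi_bias_regime}, which only establishes $n^{1-\beta} R_{n} \xrightarrow{P} C_{\tau}(S)$, is a second-order expansion
\begin{equation*}
    R_{n} = n^{\beta - 1} C_{\tau}(S) + r_{n}, \qquad r_{n} = o_{p}(n^{-1/2}),
\end{equation*}
valid precisely in the regime $1/2 < \beta < (1+\eta)/(2+\eta)$.

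To obtain this sharper remainder I would revisit the Knight-identity step behind $C_{\tau}(S)$. Conditionally on $X$, the excess pinball loss equals $\int_{0}^{\Delta(X)}[F_{Y|X}(q_{\tau}(X) + s|X) - F_{Y|X}(q_{\tau}(X)|X)]\, ds$ with $\Delta(X) = \hat{q}_{\tau}(X) - q_{\tau}(X)$. Expanding $F_{Y|X}$ to first order reproduces the leading quadratic $\tfrac{1}{2} f_{Y|X}(q_{\tau}(X)|X) \Delta(X)^{2}$; taking expectation and substituting the conditional forest variance $\text{Var}(\hat{q}_{\tau}(X)|X) \sim (s/n)\, \eta(X)/f_{Y|X}(q_{\tau}(X)|X)^{2}$ from Theorem \ref{thm:normality} recovers the $n^{\beta-1} C_{\tau}(S)$ term (minus its restricted analogue). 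Employing instead the local H\"older expansion of $f_{Y|X}$ from Assumption \ref{ass:regularity}(iii)(b), the next-order contribution is controlled by $\mathbb{E}|\Delta(X)|^{2+\eta}$, which by an $L^{2+\eta}$ moment extension of Theorem \ref{thm:normality} is $O((s/n)^{(2+\eta)/2}) = O(n^{(\beta-1)(2+\eta)/2})$. Hence $\sqrt{n}\, r_{n} = O_{p}(n^{1/2 + (\beta-1)(2+\eta)/2})$, which is $o_{p}(1)$ exactly when $\beta < (1+\eta)/(2+\eta)$, matching the hypothesis; the same argument applies to the restricted forest.

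With the refined expansion in hand, the bias-corrected estimator decomposes as
\begin{equation*}
    \sqrt{n}(\tilde{V}_{\tau}(S) - V_{\tau}(S)) = \sqrt{n}\, A_{n} + n^{\beta - 1/2}\bigl(C_{\tau}(S) - \hat{C}_{\tau}(S)\bigr) + \sqrt{n}\, r_{n}.
\end{equation*}
The first term converges to $\mathcal{N}(0, \sigma_{S}^{2})$; the second is $o_{p}(1)$ directly from the assumption $\hat{C}_{\tau}(S) - C_{\tau}(S) = o_{p}(n^{1/2 - \beta})$; the third is $o_{p}(1)$ by the refined bias analysis above. Slutsky's theorem then delivers the stated limiting distribution.

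The main obstacle will be rigorously establishing the $o_{p}(n^{-1/2})$ refinement of $R_{n}$. This requires (a) replacing the Lipschitz expansion implicit in the proof of Theorem \ref{thm:vi_bias_regime} by the H\"older expansion while retaining uniform stochastic control of the adaptive forest weights across both the full and restricted forests, and (b) upgrading the $L^{2}$-rate of Theorem \ref{thm:normality} to an $L^{2+\eta}$ moment bound on $\hat{q}_{\tau}(X) - q_{\tau}(X)$, which in turn demands an envelope bound on the base-kernel weights and a moment-uniform version of the U-statistic CLT invoked in the proof of Theorem \ref{thm:normality}. Once these pieces are assembled the remainder of the argument is a routine Slutsky combination.
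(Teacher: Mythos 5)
Your proposal follows essentially the same route as the paper: decompose $\sqrt{n}(\tilde{V}_{\tau}(S) - V_{\tau}(S))$ into the CLT term $\sqrt{n}A_n$, the bias-estimation error $n^{\beta-1/2}(C_{\tau}(S) - \hat{C}_{\tau}(S))$, and a higher-order remainder controlled by $|\Delta|^{2+\eta}$ via the H\"older expansion of the conditional density, then read off the critical exponent $\beta < (1+\eta)/(2+\eta)$ from requiring $n^{1/2 + (\beta-1)(2+\eta)/2} \to 0$. Your version is marginally more explicit in writing $R_n = n^{\beta-1}C_{\tau}(S) + r_n$ and in flagging the $L^{2+\eta}$ moment bound that must replace the $L^2$ rate of Theorem~\ref{thm:normality}; the paper instead invokes a pointwise/uniform rate for $\hat{q}_{\tau} - q_{\tau}$ together with a ``Uniform Knight Expansion'' lemma, but both arrive at the same bound and the same conclusion via Slutsky.
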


\begin{remark}
This result suggests that valid inference is theoretically recoverable even in the bias-dominated regime, provided that the nuisance components (density and variance scaling) can be estimated with sufficient accuracy. For the standard case of Lipschitz continuity ($\eta=1$), the valid range extends to $\beta < 2/3$. However, the practical performance of $\tilde{V}_{\tau}(S)$ heavily depends on the finite-sample quality of the density estimator $\hat{f}$, which warrants further investigation.
\end{remark}

\section{Discussion}
\label{sec:discussion}

In this paper, we have established a asymptotic theory for variable importance inference in Quantile Regression Forests. Our analysis reveals a fundamental trade-off between predictive performance and inferential validity, governed by the subsampling rate $\beta$. We conclude by discussing the broader implications of our findings.

\subsection{The Inference-Prediction Trade-off}
Our main result, the phase transition at $\beta = 1/2$, highlights a critical trade-off for practitioners. In standard machine learning applications, it is common practice to set the subsample size $s$ close to $n$ (i.e., $\beta \approx 1$) to minimize the mean squared error (or pinball loss) of the prediction. However, Theorem 5 demonstrates that this regime is catastrophic for inference: the slowly decaying bias of the non-parametric estimator dominates the sampling fluctuation, leading to invalid confidence intervals with coverage probabilities potentially dropping to zero.

This finding serves as a formal warning against the naive application of black-box inference tools to high-dimensional random forests. It suggests that practitioners seeking valid inference must either:
\begin{enumerate}
    \item[(i)] \textbf{Undersmoothing:} Deliberately choose a small subsample size ($s \ll \sqrt{n}$, i.e., $\beta < 1/2$) to suppress bias, likely at the cost of predictive accuracy; or
    \item[(ii)] \textbf{Bias Correction:} Operate in the high-accuracy regime ($\beta > 1/2$) but explicitly estimate and subtract the bias term derived in Section \ref{sec:bias_correction}.
\end{enumerate}

\subsection{Relation to Semi-Parametric Theory}
Our work complements the recent advances in algorithm-agnostic inference \cite{Williamson2023}. The general framework for functional inference typically relies on the condition that the nuisance estimators converge at a rate faster than $n^{-1/4}$. Our analysis of QRF clarifies exactly when and why this condition fails for random forests. Specifically, the non-smooth nature of the pinball loss and the curse of dimensionality in the forest's local averaging create a barrier where the $n^{-1/4}$ rate is unattainable for large $\beta$. By explicitly characterizing the bias term $C_{\tau}(S)$, we provide a path to extend the applicability of semi-parametric inference to a wider class of non-parametric estimators that do not naturally satisfy the fast convergence requirement.

\subsection{Future Directions}
Several avenues for future research emerge from this theoretical groundwork. First, while we discussed the theoretical feasibility of analytic bias correction, investigating its finite-sample performance and sensitivity to density estimation errors remains an important practical task. Second, our theory focuses on the marginal variable importance. Extending this framework to conditional variable importance or Shapley values in the quantile setting would be a valuable generalization. Finally, developing data-adaptive methods to automatically select the optimal subsampling rate $\beta$ that balances the trade-off between bias reduction and variance inflation is a challenging but promising direction.

\newpage
\bibliographystyle{plainnat}
\bibliography{references}

@article{Knight1998,
    author = {Keith Knight},
    title = {{Limiting distributions for $L\sb 1$ regression estimators under general conditions}},
    volume = {26},
    journal = {The Annals of Statistics},
    number = {2},
    publisher = {Institute of Mathematical Statistics},
    pages = {755 -- 770},
    keywords = {$L_1$-estimation, asymptotic distribution, Linear regression},
    year = {1998}
}

@article{Meinshausen2006,
    author = {Meinshausen, Nicolai},
    title = {Quantile Regression Forests},
    year = {2006},
    issue_date = {12/1/2006},
    publisher = {JMLR.org},
    volume = {7},
    issn = {1532-4435},
    journal = {J. Mach. Learn. Res.},
    month = dec,
    pages = {983–999},
    numpages = {17}
}

@article{Wager2018,
    author = {Stefan Wager and Susan Athey},
    title = {Estimation and Inference of Heterogeneous Treatment Effects using Random Forests},
    journal = {Journal of the American Statistical Association},
    volume = {113},
    number = {523},
    pages = {1228--1242},
    year = {2018},
    publisher = {ASA Website}
}

@article{Athey2019,
     author = {Susan Athey and Julie Tibshirani and Stefan Wager},
     journal = {The Annals of Statistics},
     number = {2},
     pages = {pp. 1148--1178},
     publisher = {Institute of Mathematical Statistics},
     title = {GENERALIZED RANDOM FORESTS},
     urldate = {2025-10-13},
     volume = {47},
     year = {2019}
}

@article{Williamson2023,
    author = {Brian D. Williamson and Peter B. Gilbert and Noah R. Simon and Marco Carone},
    title = {A General Framework for Inference on Algorithm-Agnostic Variable Importance},
    journal = {Journal of the American Statistical Association},
    volume = {118},
    number = {543},
    pages = {1645--1658},
    year = {2023},
    publisher = {ASA Website},
}

@Inbook{Vaart2023,
    author="Vaart, A. W. van der and Wellner, Jon A.",
    title="Empirical Processes",
    bookTitle="Weak Convergence and Empirical Processes: With Applications to Statistics",
    year="2023",
    publisher="Springer International Publishing",
    address="Cham",
    pages="127--384",
}

@Article{Breiman2001,
    author={Breiman, Leo},
    title={Random Forests},
    journal={Machine Learning},
    year={2001},
    month={Oct},
    day={01},
    volume={45},
    number={1},
    pages={5-32}
}

@article{fang2018quantile,
  title={A quantile regression forest based method to predict drug response and assess prediction reliability},
  author={Fang, Yun and Xu, Peirong and Yang, Jialiang and Qin, Yufang},
  journal={PLOS ONE},
  volume={13},
  number={10},
  pages={e0205155},
  year={2018},
  publisher={Public Library of Science}
}

@article{khan2019prediction,
  title={Prediction of heat waves in Pakistan using quantile regression forests},
  author={Khan, Najeebullah and Shahid, Shamsuddin and Juneng, Liew and Ahmed, Kamal and Ismail, Tarmizi and Nawaz, Nadeem},
  journal={Atmospheric Research},
  volume={221},
  pages={1-11},
  year={2019},
  publisher={Elsevier}
}

@article{rizi2023predicting,
  title={Predicting uncertainty of a chiller plant power consumption using quantile random forest: A commercial building case study},
  author={Rizi, Behzad Salimian and Pavlak, Gregory and Cushing, Vincent and Heidarinejad, Mohammad},
  journal={Energy},
  volume={283},
  pages={129112},
  year={2023},
  publisher={Elsevier}
}

@article{james2023forecasting,
  title={Forecasting Financial Risk Using Quantile Random Forests},
  author={James, Robert and Leung, Jessica Wai Yin},
  journal={SSRN Electronic Journal},
  year={2023}
}

@article{Foster2023,
    author = {Dylan J. Foster and Vasilis Syrgkanis},
    title = {{Orthogonal statistical learning}},
    volume = {51},
    journal = {The Annals of Statistics},
    number = {3},
    publisher = {Institute of Mathematical Statistics},
    pages = {879 -- 908},
    keywords = {Double machine learning, local Rademacher complexity, Neyman orthogonality, Policy learning, Statistical learning, treatment effects},
    year = {2023}
}

@article{Peng2022,
    author = {Wei Peng and Tim Coleman and Lucas Mentch},
    title = {{Rates of convergence for random forests via generalized U-statistics}},
    volume = {16},
    journal = {Electronic Journal of Statistics},
    number = {1},
    publisher = {Institute of Mathematical Statistics and Bernoulli Society},
    pages = {232 -- 292},
    keywords = {bagging, Berry-Esseen, CART, random forests, U-statistics},
    year = {2022}
}

@article{Anderson1982,
    author = {P. K. Andersen and R. D. Gill},
    title = {{Cox's Regression Model for Counting Processes: A Large Sample Study}},
    volume = {10},
    journal = {The Annals of Statistics},
    number = {4},
    publisher = {Institute of Mathematical Statistics},
    pages = {1100 -- 1120},
    keywords = {Censoring, intensity, martingale, Survival analysis, time dependent covariates},
    year = {1982}
}

@article{Polland1991,
 author = {David Pollard},
 journal = {Econometric Theory},
 number = {2},
 pages = {186--199},
 publisher = {Cambridge University Press},
 title = {Asymptotics for Least Absolute Deviation Regression Estimators},
 volume = {7},
 year = {1991}
}

\newpage
\appendix

\section{Properties of the Pinball Loss and Score Function} \label{app:properties_pinball}

In this appendix, we establish the relationship between the pinball loss $\rho_{\tau}$ and the score function $\psi_{\tau}$. This proposition justifies the use of $\psi_{\tau}$ as the derivative in the Gateaux expansion and Knight's identity used in Lemma \ref{lemma:knight}.

Recall the definitions used in the main text:
\begin{align*}
    \rho_{\tau}(u) &:= u (\tau - \bm{1}\{u < 0\}), \\
    \psi_{\tau}(u) &:= \tau - \bm{1}\{u \le 0\}.
\end{align*}
Note that we define the score $\psi_{\tau}(u)$ using the non-strict inequality $\bm{1}\{u \le 0\}$.

\begin{proposition}[Derivative and Subgradient of Pinball Loss] \label{prop:pinball_derivative}
For any fixed $\tau \in (0, 1)$, the function $\rho_{\tau}(u)$ is convex and absolutely continuous. Its derivative exists almost everywhere (for $u \neq 0$) and satisfies:
\begin{equation*}
    \frac{d}{du} \rho_{\tau}(u) = \psi_{\tau}(u) \quad (\text{Lebesgue a.e.}).
\end{equation*}
At the non-differentiable point $u=0$, the subdifferential is given by the interval $\partial \rho_{\tau}(0) = [\tau - 1, \tau]$. The value $\psi_{\tau}(0) = \tau - 1$ corresponds to the left derivative of $\rho_{\tau}$ at zero.
\end{proposition}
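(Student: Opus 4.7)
The plan is to exploit the piecewise-linear structure of $\rho_\tau$, handling the regions $u > 0$, $u < 0$, and $u = 0$ separately. First I would rewrite $\rho_\tau(u) = \max\{\tau u,\ (\tau-1) u\}$, using $0 < \tau < 1$ so that $\tau > 0 > \tau - 1$ forces the maximum to switch branches at the origin. This exhibits $\rho_\tau$ as the pointwise maximum of two linear functions and immediately yields convexity; it also shows that the slope jumps upward from $\tau - 1$ (on $u < 0$) to $\tau$ (on $u > 0$), confirming the piecewise-linear picture with a single kink at zero.

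Second, I would note that $\rho_\tau$ is globally Lipschitz with constant $\max(\tau,\ 1-\tau) < 1$, hence absolutely continuous on every compact interval. A direct calculation on $\mathbb{R}\setminus\{0\}$ yields $\rho_\tau'(u) = \tau$ for $u > 0$ and $\rho_\tau'(u) = \tau - 1$ for $u < 0$, which together equal $\tau - \bm{1}\{u \le 0\} = \psi_\tau(u)$ for every $u \ne 0$. Since $\{0\}$ has Lebesgue measure zero, this establishes $\rho_\tau' = \psi_\tau$ almost everywhere. For the subdifferential at the kink I would invoke the standard convex-analytic identity $\partial \rho_\tau(0) = [\rho_\tau'(0^-),\ \rho_\tau'(0^+)]$ for convex functions on $\mathbb{R}$; reading the one-sided derivatives from the piecewise formula gives $\partial \rho_\tau(0) = [\tau - 1,\ \tau]$, and the convention $\psi_\tau(0) = \tau - \bm{1}\{0 \le 0\} = \tau - 1$ then identifies $\psi_\tau(0)$ with the left derivative.

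There is no genuine analytic obstacle here; the argument is essentially bookkeeping around a single kink. The only point warranting care is the convention at $u = 0$: the non-strict inequality $\bm{1}\{u \le 0\}$ (as opposed to the strict inequality $\bm{1}\{u < 0\}$ used in the definition of $\rho_\tau$) is precisely what pins down $\psi_\tau(0) = \tau - 1$ rather than $\tau$. This asymmetry is harmless downstream because, under Assumption \ref{ass:regularity}, the conditional distribution of $Y$ given $X$ admits a density near $q_\tau(X)$, so events of the form $\{Y_i - \theta = 0\}$ occur with probability zero and contribute nothing to the expectations appearing in Knight's identity or the Gateaux derivative of $\mathcal{R}_\tau$.
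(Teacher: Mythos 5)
Your proposal is correct and follows essentially the same route as the paper's proof: a piecewise case analysis of $\rho_\tau$ on $u>0$, $u<0$, and $u=0$, matching the one-sided derivatives to $\psi_\tau$ and reading off the subdifferential at the kink. The one small refinement is your representation $\rho_\tau(u) = \max\{\tau u,\ (\tau-1)u\}$, which delivers convexity instantly as a pointwise maximum of linear functions (the paper instead writes $\rho_\tau(u)=\tau u_+ + (1-\tau)u_-$ and asserts convexity from the linear pieces); your closing remark about why the convention at $u=0$ is harmless under Assumption~\ref{ass:regularity} is useful context not present in the paper's proof.
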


\begin{proof}
The pinball loss can be rewritten as $\rho_{\tau}(u) = \tau u_+ + (1-\tau) u_-$, where $u_+ = \max(0, u)$ and $u_- = \max(0, -u)$. This is a convex function constructed from linear segments.

\textbf{Case 1: $u > 0$} \\
In this region, $\rho_{\tau}(u) = \tau u$, so the derivative is $\frac{d}{du}\rho_{\tau}(u) = \tau$.
For the score function, since $u > 0$, $\bm{1}\{u \le 0\} = 0$, so $\psi_{\tau}(u) = \tau$.
Thus, $\frac{d}{du} \rho_{\tau}(u) = \psi_{\tau}(u)$ holds.

\textbf{Case 2: $u < 0$} \\
In this region, $\rho_{\tau}(u) = (\tau - 1) u$, so the derivative is $\frac{d}{du}\rho_{\tau}(u) = \tau - 1$.
For the score function, since $u < 0$, $\bm{1}\{u \le 0\} = 1$, so $\psi_{\tau}(u) = \tau - 1$.
Thus, $\frac{d}{du} \rho_{\tau}(u) = \psi_{\tau}(u)$ holds.

\textbf{Case 3: $u = 0$} \\
The function is not differentiable at $u=0$. The left and right derivatives are:
\begin{align*}
    \partial_- \rho_{\tau}(0) &= \lim_{h \uparrow 0} \frac{\rho_{\tau}(h) - \rho_{\tau}(0)}{h} = \tau - 1, \\
    \partial_+ \rho_{\tau}(0) &= \lim_{h \downarrow 0} \frac{\rho_{\tau}(h) - \rho_{\tau}(0)}{h} = \tau.
\end{align*}
The subdifferential is $\partial \rho_{\tau}(0) = [\tau - 1, \tau]$.
For the score function, $\psi_{\tau}(0) = \tau - \bm{1}\{0 \le 0\} = \tau - 1$.
Therefore, $\psi_{\tau}(0)$ coincides with the left derivative $\partial_- \rho_{\tau}(0)$.

Since $\rho_{\tau}$ is absolutely continuous, the fundamental theorem of calculus holds:
\begin{equation*}
    \rho_{\tau}(v) - \rho_{\tau}(u) = \int_{u}^{v} \psi_{\tau}(z) dz.
\end{equation*}
This property is directly utilized in the proof of Knight's Identity (Lemma \ref{lemma:knight}) to express the difference of losses as an integral.
\end{proof}

\section{Proofs for Section 2} \label{app:proof_sec2}

In this appendix, we provide the proofs for the fundamental lemmas regarding the pinball loss function and the risk functional. We rely on the properties of the subgradient established in Proposition \ref{prop:pinball_derivative} (Appendix A).

\subsection{Proof of Lemma \ref{lemma:knight} (Knight's Identity)}

\begin{proof}
Fix $u \in \mathbb{R}$. Consider the function $g: \mathbb{R} \to \mathbb{R}$ defined by:
\begin{equation*}
    g(s) := \rho_{\tau}(u - s) - \rho_{\tau}(u).
\end{equation*}
Since $\rho_{\tau}(\cdot)$ is convex and Lipschitz continuous, $g(s)$ is absolutely continuous. Therefore, by the fundamental theorem of calculus for Lebesgue integrals, for any $v \in \mathbb{R}$, we have:
\begin{equation*}
    g(v) = g(0) + \int_{0}^{v} g'(s) ds = \int_{0}^{v} \frac{d}{ds} \rho_{\tau}(u - s) ds.
\end{equation*}
Using the chain rule and Proposition \ref{prop:pinball_derivative}, the derivative is given by:
\begin{equation*}
    \frac{d}{ds} \rho_{\tau}(u - s) = -\psi_{\tau}(u - s) \quad (\text{Lebesgue a.e. in } s).
\end{equation*}
Recall the definition $\psi_{\tau}(z) = \tau - \bm{1}\{z \le 0\}$. Thus:
\begin{equation*}
    -\psi_{\tau}(u - s) = \bm{1}\{u - s \le 0\} - \tau = \bm{1}\{u \le s\} - \tau.
\end{equation*}
Substituting this back into the integral:
\begin{equation} \label{eq:knight_step1}
    \rho_{\tau}(u - v) - \rho_{\tau}(u) = \int_{0}^{v} (\bm{1}\{u \le s\} - \tau) ds.
\end{equation}
On the other hand, we can express the linear term $-v \psi_{\tau}(u)$ as an integral:
\begin{equation} \label{eq:knight_step2}
    -v \psi_{\tau}(u) = \int_{0}^{v} -\psi_{\tau}(u) ds = \int_{0}^{v} (\bm{1}\{u \le 0\} - \tau) ds.
\end{equation}
Subtracting \eqref{eq:knight_step2} from \eqref{eq:knight_step1}, the $\tau$ terms cancel out, yielding:
\begin{align*}
    \rho_{\tau}(u - v) - \rho_{\tau}(u) - (-v \psi_{\tau}(u)) &= \int_{0}^{v} (\bm{1}\{u \le s\} - \tau) ds - \int_{0}^{v} (\bm{1}\{u \le 0\} - \tau) ds \\
    &= \int_{0}^{v} (\bm{1}\{u \le s\} - \bm{1}\{u \le 0\}) ds.
\end{align*}
Rearranging the terms gives the desired identity:
\begin{equation*}
    \rho_{\tau}(u - v) - \rho_{\tau}(u) = -v \psi_{\tau}(u) + \int_{0}^{v} (\bm{1}\{u \le s\} - \bm{1}\{u \le 0\}) ds.
\end{equation*}
\end{proof}

\subsection{Proof of Lemma \ref{lemma:gateaux} (Gateaux Derivative of Risk)}

\begin{proof}
Let $f$ and $h$ be measurable functions. We consider the difference quotient for the risk functional $\mathcal{R}_{\tau}(f) = \mathbb{E}[\rho_{\tau}(Y - f(X))]$.
Define $\phi(t, Z) := \rho_{\tau}(Y - f(X) - t h(X))$. We seek to compute:
\begin{equation*}
    D\mathcal{R}_{\tau}(f)[h] = \lim_{t \to 0} \frac{\mathbb{E}[\phi(t, Z)] - \mathbb{E}[\phi(0, Z)]}{t} = \mathbb{E}\left[ \lim_{t \to 0} \frac{\phi(t, Z) - \phi(0, Z)}{t} \right].
\end{equation*}
The interchange of the limit and expectation is justified by the Dominated Convergence Theorem.
First, observe that the pointwise derivative with respect to $t$ exists almost everywhere. By Proposition \ref{prop:pinball_derivative} (chain rule):
\begin{equation*}
    \frac{d}{dt} \rho_{\tau}(Y - f(X) - t h(X)) \bigg|_{t=0} = -\psi_{\tau}(Y - f(X)) \cdot h(X).
\end{equation*}
Second, since $\rho_{\tau}$ is Lipschitz continuous with constant $\max(\tau, 1-\tau) < 1$, the difference quotient is bounded:
\begin{equation*}
    \left| \frac{\rho_{\tau}(Y - f(X) - t h(X)) - \rho_{\tau}(Y - f(X))}{t} \right| \le |h(X)|.
\end{equation*}
Provided that $\mathbb{E}[|h(X)|] < \infty$ (which holds for any admissible direction $h$ in the appropriate function space, e.g., $L_1$ or $L_2$), the dominating function $|h(X)|$ is integrable.
Thus, applying the Dominated Convergence Theorem:
\begin{align*}
    D\mathcal{R}_{\tau}(f)[h] &= \mathbb{E}\left[ \frac{d}{dt} \rho_{\tau}(Y - f(X) - t h(X)) \bigg|_{t=0} \right] \\
    &= -\mathbb{E}[\psi_{\tau}(Y - f(X)) h(X)].
\end{align*}
Using the simplified notation $\psi_{\tau}(Y, f(X)) := \psi_{\tau}(Y - f(X))$, we obtain the stated result.
\end{proof}

\subsection{Proof of Lemma \ref{lemma:second_order} (Second-Order Expansion)}

We establish the quadratic approximation of the expected loss difference, which is crucial for analyzing the bias of the variable importance estimator.

\begin{lemma}[Second-Order Expansion of Pinball Loss] \label{lemma:second_order}
Let $u = Y - q_{\tau}(X)$ and $v = \hat{q}_{\tau}(X) - q_{\tau}(X)$. Suppose that the conditional distribution of $Y$ given $X$ has a density $f_{Y|X}(y|x)$ that satisfies the local Hölder continuity condition (Assumption \ref{ass:regularity}) with exponent $\eta > 0$ in a neighborhood of $q_{\tau}(x)$. Then, the conditional expectation of the integral remainder in Knight's identity satisfies:
\begin{equation*}
    \mathbb{E}\left[ \int_{0}^{v} (\bm{1}\{u \le s\} - \bm{1}\{u \le 0\}) ds \;\middle|\; X \right] = \frac{1}{2} f_{Y|X}(q_{\tau}(X) \mid X) v^2 + R(v),
\end{equation*}
where the remainder term satisfies $|R(v)| \le C |v|^{2+\eta}$ for some constant $C$. In particular, if the density is Lipschitz ($\eta=1$), the remainder is $O(|v|^3)$.
\end{lemma}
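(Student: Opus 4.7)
The plan is to reduce the statement to a second-order Taylor expansion of the conditional distribution function $F_{Y|X}$ at the true conditional quantile $q_{\tau}(X)$, with the Hölder smoothness of $f_{Y|X}$ controlling the remainder. The key observation is that conditional on $X$, the integrand $\bm{1}\{u \le s\} - \bm{1}\{u \le 0\}$ has conditional expectation equal to the distribution function increment $F_{Y|X}(q_{\tau}(X) + s \mid X) - F_{Y|X}(q_{\tau}(X) \mid X)$, so the whole quantity becomes a deterministic (conditional on $X$) double integral of the conditional density.

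First, since the integrand is uniformly bounded by $1$ and the integration range $[0, v]$ is finite (with the usual convention that $\int_{0}^{v} = -\int_{v}^{0}$ when $v < 0$), Fubini's theorem applies and gives
\begin{equation*}
    \mathbb{E}\left[ \int_{0}^{v} (\bm{1}\{u \le s\} - \bm{1}\{u \le 0\}) ds \,\middle|\, X \right] = \int_{0}^{v} \bigl[ F_{Y|X}(q_{\tau}(X) + s \mid X) - F_{Y|X}(q_{\tau}(X) \mid X) \bigr] ds,
\end{equation*}
using the fact that $F_{Y|X}(q_{\tau}(X) \mid X) = \tau$ when interpreted through the quantile definition. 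Second, I would write $F_{Y|X}(q_{\tau}(X) + s \mid X) - F_{Y|X}(q_{\tau}(X) \mid X) = \int_{0}^{s} f_{Y|X}(q_{\tau}(X) + t \mid X) dt$, decompose the integrand as $f_{Y|X}(q_{\tau}(X) \mid X) + [f_{Y|X}(q_{\tau}(X)+t \mid X) - f_{Y|X}(q_{\tau}(X) \mid X)]$, and apply the local Hölder bound from Assumption \ref{ass:regularity}(iii)(b) to get $|f_{Y|X}(q_{\tau}(X)+t \mid X) - f_{Y|X}(q_{\tau}(X) \mid X)| \le L |t|^{\eta}$ whenever $|t| \le r_{0}$. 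This yields the identity $\int_{0}^{s} f_{Y|X}(q_{\tau}(X) + t \mid X) dt = s f_{Y|X}(q_{\tau}(X) \mid X) + r_{1}(s)$ with $|r_{1}(s)| \le L |s|^{1+\eta}/(1+\eta)$. Finally, integrating once more over $s \in [0, v]$ produces the advertised leading term $\tfrac{1}{2} f_{Y|X}(q_{\tau}(X) \mid X) v^{2}$ and a remainder $R(v) = \int_{0}^{v} r_{1}(s) ds$ satisfying $|R(v)| \le L |v|^{2+\eta}/[(1+\eta)(2+\eta)]$, which is the claim with $C = L/[(1+\eta)(2+\eta)]$.

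The main obstacle is minor but worth handling explicitly: the local Hölder bound is only assumed for $|t| \le r_{0}$, so the expansion is valid on the event $\{|v| \le r_{0}\}$. Since $v = \hat{q}_{\tau}(X) - q_{\tau}(X) = o_{p}(1)$ under Theorem \ref{th:qrf-consistency}, this event occurs with probability tending to one, and outside it one can absorb the bounded contribution into the remainder using the global Lipschitz modulus of $F_{Y|X}$. Care must also be taken with the sign of $v$: when $v < 0$, the inner integrand $\bm{1}\{u \le s\} - \bm{1}\{u \le 0\}$ is nonpositive for $s \in [v, 0]$, but rewriting the iterated integrals with absolute bounds keeps the estimate symmetric and yields exactly the bound $|R(v)| \le C |v|^{2+\eta}$. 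No deeper analytic machinery is needed; the argument is essentially a deterministic Taylor expansion, and the Hölder exponent $\eta$ transfers directly through the two successive integrations.
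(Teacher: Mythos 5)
Your proposal is correct and follows essentially the same route as the paper: apply Fubini to convert the conditional expectation into an integral of CDF increments, Taylor-expand $F_{Y|X}$ at $q_{\tau}(X)$ using the Hölder continuity of $f_{Y|X}$ to control the first-order remainder, and integrate once more to obtain the leading $\tfrac{1}{2} f v^2$ term with $|R(v)| \le \frac{L}{(1+\eta)(2+\eta)}|v|^{2+\eta}$. Your explicit handling of the restriction $|v| \le r_0$ (valid on an event of probability tending to one) is a refinement the paper elides, but it does not change the substance of the argument.
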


\begin{proof}
Let $R(u, v)$ denote the integral term in Knight's identity:
\begin{equation*}
    R(u, v) := \int_{0}^{v} (\bm{1}\{Y - q_{\tau}(X) \le s\} - \bm{1}\{Y - q_{\tau}(X) \le 0\}) ds.
\end{equation*}
We compute the conditional expectation given $X$. Since $v$ is a function of $X$ (and the training data), it can be treated as a constant conditional on $X$. Using Fubini's theorem, we exchange the expectation and the integral:
\begin{align*}
    \mathbb{E}[R(u, v) \mid X] &= \int_{0}^{v} (\mathbb{P}(Y \le q_{\tau}(X) + s \mid X) - \mathbb{P}(Y \le q_{\tau}(X) \mid X)) ds \\
    &= \int_{0}^{v} (F_{Y|X}(q_{\tau}(X) + s \mid X) - F_{Y|X}(q_{\tau}(X) \mid X)) ds.
\end{align*}
Under Assumption \ref{ass:regularity}, the conditional density $f_{Y|X}(y|X)$ exists and is Hölder continuous with exponent $\eta$. By the Taylor expansion of the CDF $F_{Y|X}$ around $q_{\tau}(X)$:
\begin{equation*}
    F_{Y|X}(q_{\tau}(X) + s \mid X) = F_{Y|X}(q_{\tau}(X) \mid X) + f_{Y|X}(q_{\tau}(X) \mid X) \cdot s + r(s),
\end{equation*}
where the remainder satisfies $|r(s)| \le \frac{L}{1+\eta} |s|^{1+\eta}$ (derived from $\int_0^s |f(q+t) - f(q)| dt \le \int_0^s L t^\eta dt$).
Substituting this expansion into the integral:
\begin{align*}
    \mathbb{E}[R(u, v) \mid X] &= \int_{0}^{v} \left( f_{Y|X}(q_{\tau}(X) \mid X) \cdot s + r(s) \right) ds \\
    &= f_{Y|X}(q_{\tau}(X) \mid X) \left[ \frac{s^2}{2} \right]_{0}^{v} + \int_{0}^{v} r(s) ds \\
    &= \frac{1}{2} f_{Y|X}(q_{\tau}(X) \mid X) v^2 + R(v).
\end{align*}
The integrated remainder $R(v) = \int_{0}^{v} r(s) ds$ is bounded by:
\begin{equation*}
    |R(v)| \le \left| \int_{0}^{v} \frac{L}{1+\eta} |s|^{1+\eta} ds \right| = \frac{L}{(1+\eta)(2+\eta)} |v|^{2+\eta}.
\end{equation*}
Thus, the remainder is of order $O(|v|^{2+\eta})$, completing the proof.
\end{proof}

\subsection{Proof of Proposition \ref{prop:orthogonality}}

\begin{proof}
We use the expression for the Gateaux derivative derived in Lemma \ref{lemma:gateaux}:
\begin{equation*}
    D\mathcal{R}_{\tau}(f)[h] = -\mathbb{E}[\psi_{\tau}(Y, f(X)) h(X)].
\end{equation*}

\textbf{1. Full Model Orthogonality:}
By definition, the true conditional quantile $q_{\tau}(x)$ satisfies the conditional moment condition:
\begin{equation*}
    \mathbb{E}[\psi_{\tau}(Y, q_{\tau}(X)) \mid X] = \mathbb{P}(Y \le q_{\tau}(X) \mid X) - \tau = 0 \quad \text{a.s.}
\end{equation*}
Therefore, for any admissible perturbation function $h(X)$, by the Law of Iterated Expectations:
\begin{align*}
    D\mathcal{R}_{\tau}(q_{\tau})[h] &= -\mathbb{E}[\psi_{\tau}(Y, q_{\tau}(X)) h(X)] \\
    &= -\mathbb{E}\left[ \mathbb{E}[\psi_{\tau}(Y, q_{\tau}(X)) \mid X] h(X) \right] \\
    &= -\mathbb{E}[ 0 \cdot h(X) ] = 0.
\end{align*}

\textbf{2. Restricted Model Orthogonality:}
The restricted quantile function $q_{\tau, -S}(x)$ minimizes the risk over functions depending only on $X_{-S}$. The first-order optimality condition for this minimization problem implies:
\begin{equation*}
    \mathbb{E}[\psi_{\tau}(Y, q_{\tau, -S}(X)) \mid X_{-S}] = 0 \quad \text{a.s.}
\end{equation*}
Consider a perturbation $h \in \mathcal{G}_{-S}$, meaning $h(x)$ is a function of $X_{-S}$ only. Using the Law of Iterated Expectations conditioning on $X_{-S}$:
\begin{align*}
    D\mathcal{R}_{\tau}(q_{\tau, -S})[h] &= -\mathbb{E}[\psi_{\tau}(Y, q_{\tau, -S}(X)) h(X_{-S})] \\
    &= -\mathbb{E}\left[ \mathbb{E}[\psi_{\tau}(Y, q_{\tau, -S}(X)) \mid X_{-S}] h(X_{-S}) \right] \\
    &= -\mathbb{E}[ 0 \cdot h(X_{-S}) ] = 0.
\end{align*}
Note that in the context of estimating variable importance, the estimation error $\hat{q}_{\tau, -S} - q_{\tau, -S}$ naturally belongs to $\mathcal{G}_{-S}$ (since the estimator is constructed using only $X_{-S}$), so this orthogonality covers the relevant perturbations.
\end{proof}

\section{Proof of Theorem 1}

In this appendix, we provide the proof of consistency and the rate of convergence for the QRF estimator. We employ the \textbf{Convexity Lemma} [Pollard, 1991] to establish uniform convergence from pointwise convergence, avoiding the need for smoothness of the score function.

\subsection{Decomposition of the Objective Function}

Let $M_{n,B}(\theta) := \sum_{i=1}^n K_{n,B}(x, X_i; \mathcal{D}_n, \xi) \rho_{\tau}(Y_i - \theta)$ be the empirical risk minimized by the estimator $\hat{q}_{\tau}(x)$. Let $M(\theta) := \mathbb{E}[\rho_{\tau}(Y - \theta) \mid X=x]$ be the local population risk minimized by the true quantile $q_{\tau}(x)$.

We decompose the difference between the empirical and population risk into three components:
\begin{equation*}
M_{n,B}(\theta) - M(\theta) = \underbrace{(M_{n,B}(\theta) - M_{n,\infty}(\theta))}_{T_1: \text{Monte Carlo Error}} + \underbrace{(M_{n,\infty}(\theta) - \mathbb{E}[M_{n,\infty}(\theta)])}_{T_2: \text{Sampling Error}} + \underbrace{(\mathbb{E}[M_{n,\infty}(\theta)] - M(\theta))}_{T_3: \text{Approximation Bias}},
\end{equation*}
where $M_{n,\infty}(\theta) := \mathbb{E}_{\xi}[M_{n,B}(\theta) \mid \mathcal{D}_n]$ corresponds to the objective function of the infinite forest.

\subsection{Variance Scaling}
Before analyzing the pointwise convergence of the decomposed terms, we explicitly establish the asymptotic behavior of the variance of the infinite forest estimator, $M_{n,\infty}(\theta)$. This result justifies the convergence rate of the sampling error term $T_2$.

\begin{lemma}[Variance Scaling] \label{lemma:variance_scaling}
Under Assumptions \ref{ass:Data_Regularity} and \ref{ass:Honest_and_Regular_Forest}, the infinite forest weights satisfy the following variance scaling condition:
\begin{equation*}
\frac{n}{s} \sum_{i=1}^n K_{n,\infty}(x, X_i)^2 \xrightarrow{p} \eta(x),
\end{equation*}
where $\eta(x)$ is a bounded, strictly positive function. Consequently, the variance of the infinite forest estimator scales with the effective sample size $n/s$.
\end{lemma}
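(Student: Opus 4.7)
The plan is to exploit the honest-forest decomposition to factor the infinite-tree weight $K_{n,\infty}(x,X_i)$ into a subsampling-inclusion factor, an inverse leaf-size factor, and a local kernel, and then reduce the sum of squared weights to an empirical average amenable to a law of large numbers.

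First I would decouple the subsampling from the tree structure. Since $S_b^{est}$ is drawn uniformly of size $s/2$ from $\{1,\dots,n\}$ independently of $\mathcal{D}_n$ and of the tree grown on $S_b^{tr}$, conditioning on the event $\{i \in S_b^{est}\}$ gives
\begin{equation*}
    K_{n,\infty}(x,X_i) \;=\; \frac{s}{2n}\,\mathbb{E}_{\xi}\!\left[\frac{\bm{1}\{X_i\in L_b(x)\}}{N_b(x)} \,\bigg|\, i\in S_b^{est},\,\mathcal{D}_n\right],
\end{equation*}
where $N_b(x) := |\{j\in S_b^{est}: X_j\in L_b(x)\}|$. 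By Assumption \ref{ass:Honest_and_Regular_Forest}(ii), the $\alpha$-regular splitting rule combined with the minimum leaf-size convention constrains $N_b(x)$ into a bounded range $[k,2k-1]$, so we may replace $1/N_b(x)$ by a deterministic constant $1/\bar{k}$ up to a negligible multiplicative error. This recasts the weight as $K_{n,\infty}(x,X_i) = (s/(2n\bar{k}))\,q_i(x) + o(s/n)$ uniformly in $i$, where $q_i(x) := \Pr_{\xi}(X_i\in L_b(x)\mid i\in S_b^{est},\mathcal{D}_n)$ is a local, data-adaptive kernel.

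Second, the normalization $\sum_i K_{n,\infty}(x,X_i)=1$ pins down $\sum_i q_i(x) = 2n\bar{k}/s + o(n/s)$, and squaring yields
\begin{equation*}
    \frac{n}{s}\sum_{i=1}^n K_{n,\infty}(x,X_i)^2 \;=\; \frac{s}{4n\bar{k}^2}\sum_{i=1}^n q_i(x)^2 \;+\; o_p(1).
\end{equation*}
Since each $q_i(x) \in [0,1]$ is a function of $X_i$ and the tree-growth law, and since the shrinking-leaf event of Lemma B.1 renders $q_i(x)$ asymptotically independent of any single $X_j$ with $j\neq i$, a standard law of large numbers delivers $(s/(4n\bar{k}^2))\sum_i q_i(x)^2 \xrightarrow{p} \eta(x) := \lim_n (s/(4\bar{k}^2))\,\mathbb{E}[q_X(x)^2]$. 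Positivity follows because each leaf of $x$ contains at least $k$ estimation points by construction, so the support of $q_X(x)$ has $X$-measure bounded below by a multiple of $\bar{k}/s$ and hence $\mathbb{E}[q_X(x)^2] \gtrsim \bar{k}/s$; boundedness follows from the trivial estimate $(n/s)\sum_i K_{n,\infty}(x,X_i)^2 \le (n/s)\cdot\max_i K_{n,\infty}(x,X_i) \le 1/(2k)$.

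The main obstacle is the random leaf-size adjustment in the first step: although $\alpha$-regularity pins $N_b(x)$ into a bounded range, the joint distribution of $N_b(x)$ and $\bm{1}\{X_i\in L_b(x)\}$ is coupled through $\mathcal{D}_n$, so substituting $1/N_b(x)$ by a deterministic constant demands a conditional argument in the spirit of Athey and Wager (2019, Section 4) to make the error uniformly negligible. A secondary challenge is the i.i.d.\ reduction underlying the LLN for $q_i(x)^2$, which formally requires disentangling the dependence of $q_i(x)$ on both the training subsample and the random splits; this can be handled by conditioning on the high-probability event that leaves around $x$ shrink at the rate $\epsilon(s)$ of Lemma B.1 and applying a uniform empirical-process bound over the VC-type class of leaf indicators of diameter at most $\epsilon(s)$.
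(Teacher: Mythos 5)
Your strategy is genuinely different from the paper's. The paper treats $M_{n,\infty}(\theta)$ as a complete infinite-order U-statistic of rank $s$, invokes the H-decomposition, and cites \citet{Peng2022} to show the H\'ajek (first-order) projection dominates the variance; the scaling $\sum_i K_{n,\infty}(x,X_i)^2 = O_p(s/n)$ is then read off from $\sigma_{\mathrm{lin}}^2 \asymp s/n$, and $\eta(x)$ is \emph{defined} as the resulting probability limit. You instead try to establish the limit constructively, by factoring the honest-forest weight into a subsampling-inclusion probability $s/(2n)$ times a local kernel. That decomposition is a legitimate and potentially more informative route, but as written it has two genuine gaps.

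First, the substitution $1/N_b(x) \to 1/\bar{k}$ does not carry ``negligible multiplicative error.'' Under $\alpha$-regularity with a minimum node size $k$, the estimation-set leaf count satisfies $N_b(x)\in[k,2k-1]$, but it does not concentrate anywhere in that interval: $1/N_b(x)$ fluctuates by an $O(1)$ relative amount that does not vanish as $n,s\to\infty$. Writing $K_{n,\infty}(x,X_i)=(s/(2n\bar k))\,q_i(x)+o(s/n)$ therefore overclaims; the correct statement is only a two-sided bound $(s/(2n(2k-1)))\,q_i(x) \le K_{n,\infty}(x,X_i) \le (s/(2nk))\,q_i(x)$, and the fluctuation in $1/N_b(x)$ is moreover \emph{correlated} with $\bm{1}\{X_i\in L_b(x)\}$ through $\mathcal{D}_n$, exactly as you anticipate in your closing paragraph. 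The clean way around this is to \emph{not} peel off $1/N_b(x)$ at all: set $r_i(x):=\mathbb{E}_\xi[\bm{1}\{X_i\in L_b(x)\}/N_b(x)\mid i\in S_b^{est},\mathcal{D}_n]$, so that $K_{n,\infty}(x,X_i)=(s/(2n))\,r_i(x)$ holds exactly, and then run the concentration argument for $\sum_i r_i(x)^2$ directly.

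Second, the law of large numbers for $\sum_i q_i(x)^2$ (or $\sum_i r_i(x)^2$) is not ``standard.'' The quantity $q_i(x)$ is not a function of $X_i$ alone — it depends on the entire sample $\mathcal{D}_n$ through the tree-growth distribution — so the summands are mutually dependent, and the dependence is global rather than local. Your appeal to ``asymptotic independence from any single $X_j$'' via leaf shrinkage is plausible but does not by itself give an LLN; one would need a variance/covariance computation (or a U-statistic-style projection argument, which is essentially what the paper does) to show that the cross-terms are negligible. Likewise, the positivity argument conflates the leaf's $X$-measure with the value of $q_i(x)$: positivity of $\eta(x)$ really comes from the lower bound on $K_{n,\infty}$ inherited from $N_b(x)\le 2k-1$ and the fact that $\sum_i K_{n,\infty}=1$, via Cauchy--Schwarz.

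In short: a different and potentially more explicit route than the paper's U-statistic argument, but the substitution of $1/N_b(x)$ by a constant is a genuine error, and the LLN step requires the kind of projection/dependence analysis that the paper offloads to \citet{Peng2022}.
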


\begin{proof}
The infinite forest risk $M_{n,\infty}(\theta)$ is defined as the expectation of the tree-level risk conditional on the data $\mathcal{D}_n$. This structure corresponds to a complete infinite-order U-statistic of rank $s$.

We invoke the asymptotic variance analysis for generalized U-statistics by \citep{Peng2022}. Using the H-decomposition, the variance of $M_{n,\infty}(\theta)$ is decomposed as:
\begin{equation*}
    \text{Var}(M_{n,\infty}(\theta)) = \sum_{j=1}^{s} \binom{s}{j}^2 \binom{n}{j}^{-1} V_{j,\omega},
\end{equation*}
where $V_{j,\omega}$ is the variance of the $j$-th order projection of the kernel.

The first term ($j=1$) corresponds to the variance of the linear projection (the Hájek projection), given by:
\begin{equation*}
    \sigma_{lin}^2 := \frac{s^2}{n} \zeta_{1,\omega} = \frac{s^2}{n} \text{Var}(\mathbb{E}[h(Z_{1:s}) \mid Z_1]).
\end{equation*}
According to Theorem 1 and Proposition 3 and 4 in \citep{Peng2022}, for honest trees and nearest-neighbor type estimators, the variance ratio satisfies $\frac{\zeta_s}{s \zeta_{1,\omega}} \to 0$ (or is bounded). This condition ensures that the higher-order terms ($j \ge 2$) are negligible relative to the linear term. Specifically:
\begin{equation*}
    \frac{\text{Var}(M_{n,\infty}(\theta))}{\sigma_{lin}^2} \to 1 \quad \text{as } n \to \infty.
\end{equation*}
On the other hand, considering the estimator as a weighted average 
$$
    M_{n,\infty}(\theta) = \sum_{i=1}^n K_{n,\infty}(x, X_i) \rho_{\tau}(Y_i - \theta),
$$
and treating the weights as fixed given the subsampling structure (due to honesty), the variance can be expressed as:
\begin{equation*}
    \text{Var}(M_{n,\infty}(\theta) \mid \mathbf{X}) = \sum_{i=1}^n K_{n,\infty}(x, X_i)^2 \text{Var}(\rho_{\tau}(Y_i - \theta) \mid X_i).
\end{equation*}
Since the linear projection dominates the total variance asymptotically, the sum of squared weights must scale consistently with $\sigma_{lin}^2$. Specifically, since $\sigma_{lin}^2 = O(s/n)$, we obtain:
\begin{equation*}
    \sum_{i=1}^n K_{n,\infty}(x, X_i)^2 = O_p\left(\frac{s}{n}\right).
\end{equation*}

Defining $\eta(x)$ as the probability limit of the scaled sum of squares, we have:
\begin{equation*}
    \frac{n}{s} \sum_{i=1}^n K_{n,\infty}(x, X_i)^2 \xrightarrow{p} \eta(x).
\end{equation*}
This limit $\eta(x)$ represents the variance inflation factor of the forest relative to the effective sample size $n/s$.\end{proof}

\subsection{Leaf Shrinkage and Bias Control}
While Lemma \ref{lemma:variance_scaling} controls the variance, the consistency of the estimator also requires the approximation bias to vanish. This is governed by the geometry of the forest, specifically the rate at which the leaf diameter shrinks as the subsample size $s$ increases.

\begin{lemma}[Leaf Shrinkage] \label{lemma:leaf_shrinkage}
Under Assumption \ref{ass:Data_Regularity} and Assumption \ref{ass:Honest_and_Regular_Forest}, the expected diameter of the leaf containing $x$ decays with the subsample size $s$. Specifically, for any target $x \in \mathcal{X}$, the bias induced by the infinite forest weights satisfies:
\begin{equation*}
\left| \mathbb{E}_{\mathcal{D}n}\left[ \sum_{i=1}^n K_{n,\infty}(x, X_i) (m(\theta, X_i) - m(\theta, x)) \right] \right| = O(\epsilon(s)),
\end{equation*}
where $\epsilon(s) = s^{-\frac{1}{2\gamma p}} \times \mathrm{polylog}(s)$ is the shrinkage rate, $\gamma$ is the regularity constant of the tree splitting rule, and $p$ is the dimension of covariates.
\begin{align*}
    \gamma = \frac{\pi \log((1-\alpha)^{-1})}{\log(\alpha^{-1})}, \quad \mathrm{where} \quad \pi \in (0,1]
\end{align*}
\end{lemma}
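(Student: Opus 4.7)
The plan is to reduce the claimed weighted-bias bound to a bound on the expected diameter of the terminal leaf containing $x$, and then to carry out the standard forest-geometry argument of \citet{Wager2018} adapted to the present regularity constants $(\alpha,\pi,\gamma)$.

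\textbf{Step 1: Reduction to leaf diameter.} For any single tree $b$, honesty forces $\tilde K(x,X_i,Z_{S_b},\xi_b) > 0$ only when $X_i \in L_b(x)$, and the weights sum to one over $i$. Since $y\mapsto m(\theta,y) := \mathbb{E}[\rho_\tau(Y-\theta)\mid X=y]$ is Lipschitz in $y$ by the boundedness and continuity of the conditional density (Assumption \ref{ass:Data_Regularity}), I obtain the pointwise deterministic inequality
\begin{equation*}
\Bigl|\sum_{i=1}^n \tilde K(x,X_i,Z_{S_b},\xi_b)\bigl(m(\theta,X_i)-m(\theta,x)\bigr)\Bigr| \;\le\; L_m\cdot \mathrm{diam}(L_b(x)).
\end{equation*}
Averaging over the tree randomness $\xi$ and the subsample $S$ transfers the bound to $K_{n,\infty}$ by linearity, and taking expectation over $\mathcal{D}_n$ reduces the claim to proving $\mathbb{E}[\mathrm{diam}(L_b(x))] = O(\epsilon(s))$.

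\textbf{Step 2: Depth and per-coordinate split counts.} The $\alpha$-regularity in Assumption \ref{ass:Honest_and_Regular_Forest}(ii) forces every child node to retain at least an $\alpha$-fraction of its parent's observations, so any leaf containing $x$ sits at depth at most $D = \lceil \log(s/k_{\min})/\log(\alpha^{-1})\rceil \asymp \log s$. The random-split regularity provides a lower bound $\pi/p$ on the conditional probability that any given split along the root-to-$x$ path uses coordinate $j$. Consequently, the number $K_j$ of splits on coordinate $j$ along that path stochastically dominates a $\mathrm{Bin}(D,\pi/p)$ variable, and a multiplicative Chernoff bound applied via an honest martingale coupling gives $K_j \ge \tfrac{1}{2}D\pi/p$ simultaneously for all $j$ with probability $1 - p\exp(-cD\pi/p) = 1 - o(s^{-1})$.

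\textbf{Step 3: Extent bound and assembly.} On this high-probability event each split on coordinate $j$ shrinks the cell's $j$-extent by a factor of at most $1-\alpha$ (because the split point lies strictly inside the cell and neither resulting piece can exceed a $(1-\alpha)$-proportion of the parent, once the lower boundedness of $f_X$ is used to convert "at least $\alpha$ fraction of points" into "at most $(1-\alpha)$ fraction of spatial extent"). Iterating across splits yields
\begin{equation*}
\mathrm{diam}_j(L_b(x)) \;\le\; (1-\alpha)^{K_j} \;\le\; (1-\alpha)^{D\pi/(2p)} \;\asymp\; s^{-\gamma/(2p)},
\end{equation*}
where the last step uses $\gamma = \pi\log((1-\alpha)^{-1})/\log(\alpha^{-1})$ and $D \asymp \log s$. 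Summing over the $p$ coordinates and accounting for the complementary low-probability event (where $\mathrm{diam}\le\sqrt{p}$ and contributes only $o(s^{-1})$) delivers $\mathbb{E}[\mathrm{diam}(L_b(x))] = O(\epsilon(s))$ after absorbing rounding and minimum-leaf-size factors into the $\mathrm{polylog}(s)$ term; combined with Step 1 this finishes the proof.

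\textbf{Main obstacle.} The chief technical subtlety is the concentration argument in Step 2: because the sequence of split coordinates is chosen sequentially on data-dependent nodes, a plain i.i.d. Chernoff bound does not apply directly. The argument requires a martingale-style coupling that exploits honesty — conditional on all splits up to depth $d$ (which depend only on $S_b^{tr}$ and $\xi_b$), the coordinate drawn at depth $d+1$ still has marginal probability bounded below by $\pi/p$ for each fixed $j$, independently of the past nodes. A secondary difficulty is converting data-quantile split points, which the algorithm actually computes, into geometric extent shrinkage; this step relies on the lower boundedness of $f_X$ in Assumption \ref{ass:Data_Regularity}(i) and on a uniform argument over all ancestor cells of $x$ to ensure the $(1-\alpha)$ contraction factor holds simultaneously along the entire root-to-leaf path.
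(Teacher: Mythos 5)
Your Step~1 matches the paper's own proof exactly: reduce the weighted-bias sum to $\mathbb{E}[\mathrm{diam}(L_b(x))]$ via the Lipschitz bound on $m(\theta,\cdot)$ and the fact that the forest weights sum to one, then pass from single trees to $K_{n,\infty}$ by Jensen's inequality. From there the paper simply cites Lemma~2 of Wager and Athey (2018), whereas your Steps~2--3 reconstruct that lemma's argument (a depth bound from $\alpha$-regularity, a Chernoff/martingale bound on the per-coordinate split count $K_j$, and iterated $(1-\alpha)$-contraction of coordinate extents). There is a wording slip in Step~2: the leaf containing $x$ has depth \emph{at least} $D \asymp \log(s/k_{\min})/\log(\alpha^{-1})$ --- this is the shallowest possible path, occurring when every split assigns $x$'s cell the minimum $\alpha$-fraction --- not ``at most.'' The remainder of your argument correctly treats $D$ as a lower bound, so the slip is only in the phrasing.

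The substantive point of comparison is the exponent you obtain. Your assembly yields $\epsilon(s)\asymp s^{-\gamma/(2p)}\,\mathrm{polylog}(s)$ with $\gamma$ in the numerator, whereas the lemma states $\epsilon(s)=s^{-1/(2\gamma p)}\,\mathrm{polylog}(s)$ with $\gamma$ in the denominator. Since $\gamma = \pi\log((1-\alpha)^{-1})/\log(\alpha^{-1}) < 1$ whenever $\alpha<1/2$ and $\pi\le 1$, the stated exponent $1/(2\gamma p)$ is strictly larger than $\gamma/(2p)$, so the lemma as written asserts a strictly faster shrinkage rate than your argument --- or Wager and Athey's Lemma~2 --- actually delivers. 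Your form is the one consistent with the cited reference, so this discrepancy reads as a typographical error in the paper (either the exponent or the definition of $\gamma$ is inverted) rather than a gap in your reasoning; note also that Theorem~1 writes the same rate a third way, as $s^{-1/(2\alpha p)}$, which supports the typo interpretation. Finally, your identification of the sequential-split martingale coupling as the main obstacle --- the reason a plain i.i.d.\ Chernoff bound on $K_j$ does not directly apply --- is exactly the technical point that the paper's proof leaves implicit by deferring to the Wager--Athey citation.
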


\begin{proof}
Let $m(\theta, x) := \mathbb{E}[\rho_{\tau}(Y - \theta) \mid X=x]$. By Assumption \ref{ass:Data_Regularity}, the conditional density and thus the local risk function $m(\theta, \cdot)$ are Lipschitz continuous with respect to the covariates. There exists a constant $L < \infty$ such that $|m(\theta, u) - m(\theta, v)| \le L \|u - v\|$ for all $u, v \in \mathcal{X}$.

Consider the estimator from a single honest tree grown on a subsample $S$. Let $L(x; S, \xi)$ denote the leaf node containing $x$. The weight $\tilde{K}(x, X_i; S, \xi)$ is non-zero only if $X_i \in L(x; S, \xi)$.

Therefore, the bias of a single tree is bounded by the diameter of the leaf:
\begin{align*}
    \left| \sum_{i=1}^n \tilde{K}(x, X_i) (m(\theta, X_i) - m(\theta, x)) \right| &\le \sum_{i=1}^n \tilde{K}(x, X_i) L |X_i - x| \nonumber \\
    &\le L \cdot \text{diam}(L(x; S, \xi)) \sum_{i=1}^n \tilde{K}(x, X_i) \nonumber \\
    &= L \cdot \text{diam}(L(x; S, \xi)).
\end{align*}

The infinite forest weight $K_{n,\infty}(x, X_i)$ is the expectation of these single-tree weights. By Jensen's inequality (or linearity of expectation), the bias of the infinite forest is bounded by the expected diameter of the leaf:
\begin{equation*}
    \text{Bias}_n(x) \le L \cdot \mathbb{E}_{\xi}[\text{diam}(L(x; S, \xi))].
\end{equation*}
Under Assumption \ref{ass:Honest_and_Regular_Forest} (Regularity), the tree construction guarantees that each split decreases the diameter of the node with a probability bounded away from zero. Specifically, following Lemma 2 of \citep{Wager2018}, for a regular tree grown on $s$ samples, the probability that a leaf node has a diameter larger than $\delta$ decays exponentially with the tree depth, which scales with $\log s$.

Specifically, they establish that:
\begin{equation*}
    \mathbb{E}_{\xi}[\text{diam}(L(x; S, \xi))] \le C s^{-\frac{1}{2\gamma p}} \mathrm{polylog}(s),
\end{equation*}
for some constants $C, \alpha > 0$.
Identifying this bound as $\epsilon(s)$, we obtain the stated result.
\end{proof}

\subsection{Pointwise Convergence}

We first show that for any fixed $\theta \in \mathbb{R}$, $M_{n,B}(\theta) \xrightarrow{p} M(\theta)$.

\paragraph{1. Analysis of Monte Carlo Error ($T_1$).}
The term $T_1(\theta) = M_{n,B}(\theta) - M_{n,\infty}(\theta)$ captures the variance induced by the randomization in the forest construction (subsampling and split selection). 

First, regarding the smoothness of the loss function, note that the pinball loss $\rho_{\tau}(u) = u(\tau - \mathbb{I}\{u < 0\})$ is globally Lipschitz continuous by definition. Specifically, for any $u, v \in \mathbb{R}$:
\begin{equation*}
|\rho_{\tau}(u) - \rho_{\tau}(v)| \le \max(\tau, 1-\tau) |u - v| < |u - v|.
\end{equation*}
This Lipschitz property is intrinsic to the loss function and does not depend on the data distribution. However, the boundedness of the variance of the empirical risk, which is required for the application of Chebyshev's inequality, relies on Assumption \ref{ass:regularity} (bounded moments of $Y$).

Let $Z_b(\theta)$ denote the empirical risk estimate from the $b$-th tree:$$Z_b(\theta) := \sum_{i=1}^n \tilde{K}(x, X_i; S_b, \xi_b) \rho_{\tau}(Y_i - \theta).$$Since the forest weights sum to 1 (i.e., $\sum_{i=1}^n \tilde{K}(\cdot) = 1$), and using the Lipschitz property together with Assumption \ref{ass:regularity} ($Y$ has finite second moments), the second moment of the tree-level risk is bounded conditional on the data $\mathcal{D}_n$:$$\text{Var}_{\xi}(Z_b(\theta) \mid \mathcal{D}_n) \le \mathbb{E}_{\xi}[Z_b(\theta)^2 \mid \mathcal{D}_n] \le C(1 + \theta^2 + \max_i Y_i^2) < \infty \quad \text{a.s.}$$Let $\sigma_Z^2(\theta, \mathcal{D}_n) := \text{Var}_{\xi}(Z_1(\theta) \mid \mathcal{D}_n)$ be this conditional variance.

By definition, $M_{n,B}(\theta) = \frac{1}{B} \sum_{b=1}^B Z_b(\theta)$ and $M_{n,\infty}(\theta) = \mathbb{E}_{\xi}[Z_1(\theta) \mid \mathcal{D}_n]$. 

Conditional on $\mathcal{D}_n$, the variables $Z_1(\theta), \dots, Z_B(\theta)$ are i.i.d. with mean $M_{n,\infty}(\theta)$. The conditional variance of $T_1(\theta)$ is:
\begin{equation*}
    \text{Var}(T_1(\theta) \mid \mathcal{D}_n) = \text{Var}\left( \frac{1}{B} \sum_{b=1}^B Z_b(\theta) \mathrel{\bigg|} \mathcal{D}_n \right) = \frac{1}{B^2} \sum_{b=1}^B \text{Var}(Z_b(\theta) \mid \mathcal{D}_n) = \frac{\sigma_Z^2(\theta, \mathcal{D}_n)}{B}.
\end{equation*}
Applying Chebyshev's inequality conditional on $\mathcal{D}_n$, for any $\epsilon > 0$:
\begin{equation*}
    \mathbb{P}(|T_1(\theta)| > \epsilon \mid \mathcal{D}_n) \le \frac{\text{Var}(T_1(\theta) \mid \mathcal{D}_n)}{\epsilon^2} = \frac{\sigma_Z^2(\theta, \mathcal{D}_n)}{B \epsilon^2}.
\end{equation*}
Since $\sigma_Z^2(\theta, \mathcal{D}_n)$ is bounded in probability, this implies:
\begin{equation*}
    T_1(\theta) = O_p\left(\frac{1}{\sqrt{B}}\right).
\end{equation*}
Provided the number of trees $B$ grows sufficiently fast ($B \to \infty$), this error term vanishes.

\paragraph{2. Analysis of Sampling Error ($T_2$).}
The term $T_2(\theta) = M_{n,\infty}(\theta) - \mathbb{E}_{\mathcal{D}_n}[M_{n,\infty}(\theta)]$ represents the stochastic fluctuation of the infinite forest estimator due to the randomness of the training data $\mathcal{D}_n$.
Recall that $M_{n,\infty}(\theta) = \sum_{i=1}^n K_{n,\infty}(x, X_i) \rho_{\tau}(Y_i - \theta)$. Since the weights $K_{n,\infty}(x, X_i)$ depend only on the covariates $X_i$ (and the subsampling mechanism), we can write the centered term as:
\begin{equation*}
T_2(\theta) = \sum_{i=1}^n K_{n,\infty}(x, X_i) \zeta_i(\theta),
\end{equation*}
where $\zeta_i(\theta) := \rho_{\tau}(Y_i - \theta) - \mathbb{E}[\rho_{\tau}(Y_i - \theta) \mid X_i]$ are independent, zero-mean random variables conditional on $\mathbf{X} = \{X_1, \dots, X_n\}$.

We evaluate the variance of $T_2(\theta)$. By the Law of Iterated Expectations:
\begin{align*}
    \text{Var}(T_2(\theta)) &= \mathbb{E}\left[ \text{Var}(T_2(\theta) \mid \mathbf{X}) \right] + \text{Var}\left( \mathbb{E}[T_2(\theta) \mid \mathbf{X}] \right) \nonumber \\
    &= \mathbb{E}\left[ \sum_{i=1}^n K_{n,\infty}(x, X_i)^2 \text{Var}(\zeta_i(\theta) \mid X_i) \right] + 0.
\end{align*}
The conditional variance of the loss, $\sigma^2_{\rho}(X_i, \theta) := \text{Var}(\zeta_i(\theta) \mid X_i)$, is uniformly bounded under Assumption \ref{ass:regularity} (since $Y$ has finite second moments). Let $\bar{\sigma}^2 = \sup_{z, \theta} \sigma^2_{\rho}(z, \theta) < \infty$. Then:
\begin{equation*}
    \text{Var}(T_2(\theta)) \le \bar{\sigma}^2 \mathbb{E}\left[ \sum_{i=1}^n K_{n,\infty}(x, X_i)^2 \right].
\end{equation*}
Now we apply Lemma \ref{lemma:variance_scaling} (Variance Scaling). The lemma establishes that the sum of squared weights scales with the effective sample size ratio $s/n$:
\begin{equation*}
    \sum_{i=1}^n K_{n,\infty}(x, X_i)^2 = \frac{s}{n} \left( \frac{n}{s} \sum_{i=1}^n K_{n,\infty}(x, X_i)^2 \right) = \frac{s}{n} \left( \eta(x) + o_p(1) \right).
\end{equation*}

Therefore, the variance behaves as:
\begin{equation*}
    \text{Var}(T_2(\theta)) = O\left( \frac{s}{n} \right).
\end{equation*}
Finally, applying Chebyshev's inequality, for any $\epsilon > 0$:
\begin{equation*}
    \mathbb{P}(|T_2(\theta)| > \epsilon) \le \frac{\text{Var}(T_2(\theta))}{\epsilon^2} = O\left( \frac{s}{n \epsilon^2} \right).
\end{equation*}
This implies the convergence rate:
\begin{equation*}
    T_2(\theta) = O_p\left(\sqrt{\frac{s}{n}}\right).
\end{equation*}

\paragraph{3. Analysis of Approximation Bias ($T_3$).}
The term $T_3(\theta) = \mathbb{E}_{\mathcal{D}_n}[M_{n,\infty}(\theta)] - M(\theta)$ represents the systematic bias of the infinite forest estimator relative to the true population risk. Let $m(\theta, z) := \mathbb{E}[\rho_{\tau}(Y - \theta) \mid X=z]$ denote the conditional expected loss given the covariate $z$.

By the Law of Iterated Expectations and the independence of $(X_i, Y_i)$, we can rewrite the expected risk of the infinite forest as:
\begin{align*}
    \mathbb{E}_{\mathcal{D}_n}[M_{n,\infty}(\theta)] &= \mathbb{E}_{\mathcal{D}_n}\left[ \sum_{i=1}^n K_{n,\infty}(x, X_i) \rho_{\tau}(Y_i - \theta) \right] \nonumber \\
    &= \mathbb{E}_{\mathcal{D}_n}\left[ \sum_{i=1}^n K_{n,\infty}(x, X_i) \mathbb{E}[\rho_{\tau}(Y_i - \theta) \mid X_i] \right] \nonumber \\
    &= \mathbb{E}_{\mathcal{D}_n}\left[ \sum_{i=1}^n K_{n,\infty}(x, X_i) m(\theta, X_i) \right].
\end{align*}

Since the weights sum to unity (i.e., $\sum_{i=1}^n K_{n,\infty}(x, X_i) = 1$ almost surely), we can express the bias as a weighted average of the differences in the conditional risk function:
\begin{equation*}
    T_3(\theta) = \mathbb{E}_{\mathcal{D}n}\left[ \sum_{i=1}^n K_{n,\infty}(x, X_i) (m(\theta, X_i) - m(\theta, x)) \right].
\end{equation*}
Now we invoke Assumption \ref{ass:Data_Regularity}. The conditional density $f_{Y|X}$ is Lipschitz continuous, which implies that the function $m(\theta, \cdot)$ is Lipschitz continuous with respect to the covariate $x$. There exists a constant $L < \infty$ such that $|m(\theta, u) - m(\theta, v)| \le L \|u - v\|$, then
\begin{equation*}
    |T_3(\theta)| \le L \cdot \mathbb{E}_{\mathcal{D}n}\left[ \sum_{i=1}^n K_{n,\infty}(x, X_i) |X_i - x| \right].
\end{equation*}
The term inside the expectation is exactly the quantity controlled by Lemma \ref{lemma:leaf_shrinkage} (Leaf Shrinkage). Recall that $K_{n,\infty}(x, X_i)$ is non-zero only if $X_i$ falls into the same leaf as $x$ in the underlying tree structure. Thus, the weighted sum is bounded by the expected diameter of the active leaf $L(x)$:
\begin{equation*}
    \mathbb{E}_{\mathcal{D}n}\left[ \sum_{i=1}^n K_{n,\infty}(x, X_i) |X_i - x| \right] \le \mathbb{E}[\text{diam}(L(x))].
\end{equation*}
Using the result from Lemma \ref{lemma:leaf_shrinkage}, this diameter shrinks at the rate $\epsilon(s)$:
\begin{equation*}
    \mathbb{E}[\text{diam}(L(x))] = O(\epsilon(s)).
\end{equation*}
Therefore, the approximation bias satisfies:
\begin{equation*}
    |T_3(\theta)| = O(\epsilon(s)).
\end{equation*}
Combining this with the rates for $T_1$ and $T_2$, the total convergence rate is established.

\paragraph{Summary of Pointwise Convergence.}
Combining these bounds, for fixed $\theta$:
\begin{equation*}
M_{n,B}(\theta) - M(\theta) = O_p\left(\frac{1}{\sqrt{B}} + \sqrt{\frac{s}{n}} + \epsilon(s)\right).
\end{equation*}
Under the theorem's assumptions ($B \gg n/s$, $s/n \to 0$, $\epsilon(s) \to 0$), we have $M_{n,B}(\theta) \xrightarrow{p} M(\theta)$.

\subsection{Uniform Convergence and Consistency}

Since the pinball loss $\rho_{\tau}(\cdot)$ is convex, the empirical risk $M_{n,B}(\theta)$ is convex in $\theta$ for any realization. The population risk $M(\theta)$ is strictly convex at its unique minimum $q_{\tau}(x)$ (since $f_{Y|X}(q_{\tau}(x)|x) > 0$ by Assumption \ref{ass:regularity}).

We invoke the \textbf{Convexity Lemma} \citep{Anderson1982, Polland1991}: \textit{Pointwise convergence in probability of convex functions defined on an open set to a convex limit function implies uniform convergence on compact subsets.} Therefore, for any compact set $\Theta \subset \mathbb{R}$ containing $q_{\tau}(x)$:
\begin{equation*}
\sup_{\theta \in \Theta} |M_{n,B}(\theta) - M(\theta)| \xrightarrow{p} 0.
\end{equation*}

Standard M-estimation arguments (e.g., Argmin Continuous Mapping Theorem \cite{Vaart2023}) imply consistency:
\begin{equation*}
\hat{q}_{\tau}(x) = \mathop{\mathrm{argmin}}_{\theta} M_{n,B}(\theta) \xrightarrow{p} q_{\tau}(x) = \mathop{\mathrm{argmin}}_{\theta} M(\theta).
\end{equation*}

\subsection{Rate of Convergence}

To establish the rate of convergence, we rely on the general theory of M-estimation rates developed by \citet{Vaart2023}.
Let $\mathbb{M}_{n}(\theta) := -M_{n,B}(\theta)$ be the criterion function to be maximized, and let $M(\theta) := -\mathbb{E}[M_{n,B}(\theta)]$ be its population expectation. The estimator is defined as $\hat{\theta}_n = \hat{q}_{\tau}(x)$. Let $\theta_0 = q_{\tau}(x)$ denote the true parameter.

We first establish a lemma characterizing the local behavior of the empirical process.

\begin{lemma}[Linearization of the Empirical Process] \label{lemma:linearization}
Let $\nu_n(\theta) = (\mathbb{M}_{n}(\theta) - M(\theta)) - (\mathbb{M}_{n}(\theta_0) - M(\theta_0))$ be the centered empirical process. Under Assumptions 1 and 2, for $\theta$ in a neighborhood of $\theta_0$, the process satisfies:
\begin{equation*}
|\nu_n(\theta)| \le |\theta - \theta_0| \cdot |S_{n,B}| + o_p(|\theta - \theta_0|^2),
\end{equation*}
where $S_{n,B} = \sum_{i=1}^n K_{n,B}(x, X_i) \psi_{\tau}(Y_i, \theta_0)$ is the empirical forest score evaluated at the truth.
\end{lemma}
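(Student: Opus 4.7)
The plan is to apply Knight's identity (Lemma \ref{lemma:knight}) to every summand of $\mathbb{M}_n(\theta) - \mathbb{M}_n(\theta_0)$ with the substitutions $u = Y_i - \theta_0$ and $v = \theta - \theta_0$. This decomposes $\rho_{\tau}(Y_i - \theta) - \rho_{\tau}(Y_i - \theta_0)$ into the linear piece $-(\theta - \theta_0)\psi_{\tau}(Y_i - \theta_0)$ plus the integral remainder $r_i(\theta - \theta_0) := \int_{0}^{\theta - \theta_0}(\bm{1}\{Y_i - \theta_0 \le s\} - \bm{1}\{Y_i - \theta_0 \le 0\}) ds$. Multiplying by $-K_{n,B}(x,X_i)$ and summing yields $\mathbb{M}_n(\theta) - \mathbb{M}_n(\theta_0) = (\theta - \theta_0) S_{n,B} - T_n(\theta)$ with $T_n(\theta) := \sum_i K_{n,B}(x,X_i) r_i(\theta - \theta_0)$. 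Taking expectations of this identity and subtracting gives the clean representation
$$
\nu_n(\theta) = (\theta - \theta_0)\bigl(S_{n,B} - \mathbb{E}[S_{n,B}]\bigr) - \bigl(T_n(\theta) - \mathbb{E}[T_n(\theta)]\bigr),
$$
so the target bound will follow from the triangle inequality once (a) $\mathbb{E}[S_{n,B}]$ is shown to be of lower order and (b) the remainder fluctuation $T_n(\theta) - \mathbb{E}[T_n(\theta)]$ is $o_p(|\theta - \theta_0|^2)$ in the relevant regime.

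For (a), since $\theta_0 = q_{\tau}(x)$ satisfies $F_{Y|X}(\theta_0 \mid x) = \tau$, Lipschitz continuity of the conditional density gives $|\mathbb{E}[\psi_{\tau}(Y_i - \theta_0) \mid X_i]| = |F_{Y|X}(\theta_0 \mid X_i) - F_{Y|X}(\theta_0 \mid x)| = O(\|X_i - x\|)$, and the forest weights concentrate on a vanishing neighborhood of $x$ by Lemma \ref{lemma:leaf_shrinkage}, so $|\mathbb{E}[S_{n,B}]| = O(\epsilon(s))$; this contributes only $|\theta - \theta_0|\cdot O(\epsilon(s))$, which is absorbable into the $o_p$ remainder in the rate-relevant range. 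For (b), I would exploit the pointwise bound $|r_i(v)| \le |v|\bm{1}\{|Y_i - \theta_0| \le |v|\}$, so $r_i(v)^2 \le |v| r_i(v)$ and, via Lemma \ref{lemma:second_order} applied conditionally on $X_i$, $\mathbb{E}[r_i(v)^2 \mid X_i] = O(|v|^3)$. Honesty makes the weights $K_{n,B}(x,X_i)$ independent of the responses given the covariates, so the conditional variance of $T_n(\theta)$ is bounded by $O(|v|^3)\sum_i K_{n,B}(x,X_i)^2 = O(|v|^3 \, s/n)$ by Lemma \ref{lemma:variance_scaling}, and Chebyshev's inequality yields $T_n(\theta) - \mathbb{E}[T_n(\theta)] = O_p(|v|^{3/2}\sqrt{s/n}) = o_p(|v|^2)$ throughout the rate-relevant region $|v| \gtrsim \sqrt{s/n}$.

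The main obstacle is promoting this pointwise-in-$\theta$ estimate to the \emph{uniform} control over a shrinking neighborhood $\{|\theta - \theta_0| \le \delta_n\}$ that the downstream M-estimation rate argument ultimately requires; a bare Chebyshev bound at each $\theta$ is not enough. However, the indicator class $\{\bm{1}\{Y \le \theta_0 + s\} : s \in \mathbb{R}\}$ is VC of index one and monotone in $s$, and under honesty $T_n$ is a weighted sum of independent bounded terms indexed by a one-dimensional monotone class. A standard Bernstein-type maximal inequality, or a bracketing-entropy chaining argument in the spirit of \citet{Vaart2023}, then upgrades the pointwise bound to the required uniform statement, after which the triangle inequality applied to the decomposition above delivers the claimed linearization.
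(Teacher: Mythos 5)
Your proposal follows essentially the same route as the paper's proof: apply Knight's identity termwise to split $\mathbb{M}_n(\theta) - \mathbb{M}_n(\theta_0)$ into a linear score piece and an integral remainder, then control the remainder via its conditional second moment (using the bound $\mathbb{E}[W_i(\delta)^2 \mid X_i] \lesssim |\delta|^3$ and Lemma~\ref{lemma:variance_scaling} for $\sum_i K_{n,B}^2 = O_p(s/n)$) and Chebyshev's inequality, concluding $o_p(\delta^2)$ only in the rate-relevant regime $|\delta| \gtrsim \sqrt{s/n}$. One genuine and non-trivial difference: the paper's derivation asserts $\mathbb{E}[\psi_{\tau}(Y_i, \theta_0) \mid X_i] = 0$, which is false since $\theta_0 = q_{\tau}(x)$ is the quantile at the fixed target point, not at $X_i$. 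You correctly treat this as a bias term of size $O(\|X_i - x\|)$ that, after weighting by the forest kernel and invoking Lemma~\ref{lemma:leaf_shrinkage}, gives $|\mathbb{E}[S_{n,B}]| = O(\epsilon(s))$. This is a real improvement over the paper's proof.

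However, your handling of that bias term has the same kind of gap you flag for the remainder: you claim $|\delta|\cdot O(\epsilon(s))$ is ``absorbable into the $o_p(\delta^2)$ remainder in the rate-relevant range,'' but that requires $\epsilon(s) = o(|\delta|)$. Theorem~\ref{th:qrf-consistency} does not assume undersmoothing, so in the regime where the leaf-shrinkage bias dominates ($\epsilon(s) \gtrsim \sqrt{s/n}$), the rate-relevant $|\delta|$ is itself $\asymp \epsilon(s)$, making $|\delta|\cdot\epsilon(s) \asymp \delta^2$, i.e.\ $O_p(\delta^2)$ but not $o_p(\delta^2)$. The resolution, consistent with how the lemma is downstream invoked in Steps~2--4 of Appendix~C.5, is to keep the bias inside $S_{n,B}$ rather than subtracting it: one should bound $|\nu_n(\theta)|$ by $|\delta|\,|S_{n,B}|$ plus the fluctuation of the integral remainder, so that the bias contribution is accounted for through $\mathbb{E}[|S_{n,B}|]$ in the modulus function (which the paper then shows is $O(B^{-1/2} + \sqrt{s/n} + \epsilon(s))$). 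Your decomposition with $S_{n,B} - \mathbb{E}[S_{n,B}]$ needlessly creates the extra term. Finally, your concern about uniformity over a shrinking neighborhood is legitimate for the downstream application (van der Vaart--Wellner Theorem~3.2.5 requires a sup bound), but note that the lemma as stated is a pointwise claim for fixed $\theta$, so the VC/bracketing argument you sketch is strictly speaking beyond what the lemma itself asserts; it is a gap in the paper's subsequent application of the lemma, not in the lemma's proof.
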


\begin{proof}
Let $\delta = \theta - \theta_0$. Using \textbf{Knight's Identity} (Lemma 1), we decompose the loss difference for each observation $i$:
\begin{equation*}
\rho_{\tau}(Y_i - \theta) - \rho_{\tau}(Y_i - \theta_0) = -\delta \psi_{\tau}(Y_i, \theta_0) + W_i(\delta),
\end{equation*}
where the remainder term is defined as $W_i(\delta) := \int_0^{\delta} (\mathbb{I}\{Y_i - \theta_0 \le z\} - \mathbb{I}\{Y_i - \theta_0 \le 0\}) dz$.

Substituting this into the definition of $\nu_n(\theta)$, and using the fact that the conditional expectation of the score at the true parameter vanishes (i.e., $\mathbb{E}[\psi_{\tau}(Y_i, \theta_0) \mid X_i] = 0$), we obtain the detailed decomposition:

\begin{align*}
\nu_n(\theta) &= \sum_{i=1}^n K_{n,B}(x, X_i) \left( \rho_{\tau}(Y_i - \theta) - \rho_{\tau}(Y_i - \theta_0) \right) \nonumber \\
&\quad - \mathbb{E}\left[ \sum_{i=1}^n K_{n,B}(x, X_i) \left( \rho_{\tau}(Y_i - \theta) - \rho_{\tau}(Y_i - \theta_0) \right) \mathrel{\bigg|} \mathbf{X} \right] \nonumber \\
&= \sum_{i=1}^n K_{n,B}(x, X_i) \left( -\delta \psi_{\tau}(Y_i, \theta_0) + W_i(\delta) \right) \nonumber \\
&\quad - \sum_{i=1}^n K_{n,B}(x, X_i) \left( -\delta \underbrace{\mathbb{E}[\psi_{\tau}(Y_i, \theta_0) \mid X_i]}_{0} + \mathbb{E}[W_i(\delta) \mid X_i] \right) \nonumber \\
&= -\delta \underbrace{\sum_{i=1}^n K_{n,B}(x, X_i) \psi_{\tau}(Y_i, \theta_0)}_{S_{n,B}} + \underbrace{\sum_{i=1}^n K_{n,B}(x, X_i) \left( W_i(\delta) - \mathbb{E}[W_i(\delta) \mid X_i] \right)}_{R_n(\delta)}.
\end{align*}

Here, $R_n(\delta)$ represents the stochastic fluctuation of the integral remainder term. We need to show $R_n(\delta) = o_p(\delta^2)$.

We evaluate the variance of $R_n(\delta)$ conditional on $\mathbf{X}$. First, note that $|W_i(\delta)| \le |\delta|$ and $W_i(\delta) \ne 0$ only if $Y_i$ is between $\theta_0$ and $\theta_0 + \delta$. The second moment of $W_i(\delta)$ is bounded by:
\begin{equation*}
\mathbb{E}[W_i(\delta)^2 \mid X_i] \le \delta^2 \mathbb{P}(|Y_i - \theta_0| \le |\delta| \mid X_i).
\end{equation*}
Since the conditional density $f_{Y|X}$ is bounded (Assumption 1), $\mathbb{P}(|Y_i - \theta_0| \le |\delta| \mid X_i) \le C |\delta|$ for some constant $C$. Thus:
\begin{equation*}
    \mathbb{E}[W_i(\delta)^2 \mid X_i] \le C |\delta|^3.
\end{equation*}

The variance of the weighted sum $R_n(\delta)$ is:
\begin{equation*}
\text{Var}(R_n(\delta) \mid \mathbf{X}) = \sum_{i=1}^n K_{n,B}(x, X_i)^2 \text{Var}(W_i(\delta) \mid X_i) \le C |\delta|^3 \sum_{i=1}^n K_{n,B}(x, X_i)^2.
\end{equation*}
Using \textbf{Lemma B.2 (Variance Scaling)}, we know that $\sum K_{n,B}^2 = O_p(s/n)$. Therefore:
\begin{equation*}
    \text{Var}(R_n(\delta)) = O_p\left( \frac{s}{n} |\delta|^3 \right).
\end{equation*}
By Chebyshev's inequality, $R_n(\delta) = O_p(\sqrt{s/n} |\delta|^{1.5})$.

To compare this with $\delta^2$, consider the ratio:
\begin{equation*}
    \frac{R_n(\delta)}{\delta^2} = O_p\left( \sqrt{\frac{s}{n}} |\delta|^{-0.5} \right).
\end{equation*}
Since we are interested in the regime where $\delta$ is of the order of the convergence rate $r_n \asymp \sqrt{s/n}$ (ignoring bias for moment), we have $|\delta|^{-0.5} \asymp (n/s)^{0.25}$. Thus, the ratio scales as $(s/n)^{0.5} \cdot (n/s)^{0.25} = (s/n)^{0.25} \to 0$. Consequently, $R_n(\delta)$ is of higher order than $\delta^2$.
Therefore, we have established:
\begin{equation*}
    |\nu_n(\theta)| \le |\delta| |S_{n,B}| + |R_n(\delta)| = |\delta| |S_{n,B}| + o_p(\delta^2).
\end{equation*}
\end{proof}

Using Lemma \ref{lemma:linearization}, we proceed to derive the convergence rate via Theorem 3.2.5 of \citet{Vaart2023}.

\paragraph{Step 1: Identifiability (Signal).}
By Assumption 1, the population risk satisfies the quadratic growth condition:
\begin{equation*}
M(\theta) - M(\theta_0) \lesssim -(\theta - \theta_0)^2.
\end{equation*}
Thus, the distance metric is $d(\theta, \theta_0) = |\theta - \theta_0|$.

\paragraph{Step 2: Modulus of Continuity (Noise).}
We assess the expectation of the supremum of the empirical process $\nu_n(\theta)$ over a local neighborhood $|\theta - \theta_0| < \delta$.
Using Lemma \ref{lemma:linearization}, the fluctuation is dominated by the linear score term:
\begin{equation*}
\mathbb{E}^* \left[ \sup_{|\theta - \theta_0| < \delta} |\nu_n(\theta)| \right] \le \delta \cdot \mathbb{E}[|S_{n,B}|] + o(\delta^2).
\end{equation*}
To match the notation of Theorem 3.2.5 in \citet{Vaart2023}, which requires the bound $\mathbb{E}^*[\sup |\nu_n(\theta)|] \lesssim \frac{\phi_n(\delta)}{\sqrt{n}}$, we define the modulus function as:
\begin{equation*}
\phi_n(\delta) := \delta \cdot \sqrt{n} \cdot \mathbb{E}[|S_{n,B}|].
\end{equation*}
This corresponds to a linear modulus function ($\phi_n(\delta) \propto \delta$) with a sample-size dependent coefficient.

\paragraph{Step 3: Solving for the Rate.}
According to the theorem, the rate of convergence $r_n$ satisfies the inequality $r_n^2 \phi_n(1/r_n) \le \sqrt{n}$. Substituting our $\phi_n(\delta)$:
\begin{equation*}
r_n^2 \cdot \left( \frac{1}{r_n} \cdot \sqrt{n} \cdot \mathbb{E}[|S_{n,B}|] \right) \le \sqrt{n}.
\end{equation*}
Simplifying this inequality:
\begin{equation*}
r_n \cdot \mathbb{E}[|S_{n,B}|] \le 1 \implies r_n \lesssim \frac{1}{\mathbb{E}[|S_{n,B}|]}.
\end{equation*}
Thus, the estimation error $|\hat{\theta}_n - \theta_0|$ is of the order $O_p(r_n^{-1}) = O_p(\mathbb{E}[|S_{n,B}|])$.

\paragraph{Step 4: Magnitude of the Score.}
Finally, we plug in the convergence rate of the score $S_{n,B}$. As derived in the bias-variance decomposition (Appendix C.2), the expected magnitude of the score is governed by the finite forest variance, sampling variance, and approximation bias:
\begin{equation*}
\mathbb{E}[|S_{n,B}|] = O\left( \sqrt{\frac{1}{B}} + \sqrt{\frac{s}{n}} + \epsilon(s) \right).
\end{equation*}
Therefore, the QRF estimator satisfies the following rate of convergence:
\begin{equation*}
\hat{q}_{\tau}(x) - q_{\tau}(x) = O_p\left( \sqrt{\frac{1}{B}} + \sqrt{\frac{s}{n}} + \epsilon(s) \right).
\end{equation*}
This completes the proof. \hfill \qed

\newpage
\section{Proof of Theorem 2}

In this appendix, we establish the asymptotic normality of the QRF estimator. We employ the \textbf{Argmin Continuous Mapping Theorem} applied to the localized objective function process, handling the non-smoothness of the loss via \textbf{Knight's Identity}.

\subsection{The Localized Objective Process}

Let $\theta_0 = q_{\tau}(x)$. We define the local parameter $u := \sqrt{\frac{n}{s}}(\theta - \theta_0)$. The behavior of the estimator $\hat{q}_{\tau}(x)$ is determined by the minimizer of the localized empirical process $Z_n(u)$, defined as:
\begin{equation*}
    Z_n(u) := \frac{n}{s} \left( M_{n,B}\left(\theta_0 + u\sqrt{\frac{s}{n}}\right) - M_{n,B}(\theta_0) \right).
\end{equation*}
Using \textbf{Knight's Identity} (Lemma 1), we decompose $Z_n(u)$ into a linear term $W_n(u)$ and a quadratic-like integral term $Q_n(u)$:
\begin{align*}
    Z_n(u) &= \sum_{i=1}^n \frac{n}{s} K_{n,B}(x, X_i) \left[ \rho_{\tau}\left(Y_i - \theta_0 - u\sqrt{\frac{s}{n}}\right) - \rho_{\tau}(Y_i - \theta_0) \right] \nonumber \\
    &= \underbrace{-u \sqrt{\frac{n}{s}} \sum_{i=1}^n K_{n,B}(x, X_i) \psi_{\tau}(Y_i, \theta_0)}_{W_n(u)} \nonumber \\
    &\quad + \underbrace{\frac{n}{s} \sum_{i=1}^n K_{n,B}(x, X_i) \int_{0}^{u\sqrt{s/n}} (\mathbb{I}\{Y_i - \theta_0 \le z\} - \mathbb{I}\{Y_i - \theta_0 \le 0\}) dz}_{Q_n(u)}.
\end{align*}

\subsection{Convergence of the Linear Term ($W_n$)}

The term $W_n(u) = -u S_{n,B}^*$ depends on the scaled empirical score $S_{n,B}^* := \sqrt{\frac{n}{s}} S_{n,B}$.
We verify the asymptotic normality of $S_{n,B}^*$. Decomposing the score as in Appendix C:
\begin{equation*}
S_{n,B}^* = \sqrt{\frac{n}{s}} (S_{n,B} - S_{n,\infty}) + \sqrt{\frac{n}{s}} (S_{n,\infty} - \mathbb{E}[S_{n,\infty}]) + \sqrt{\frac{n}{s}} \mathbb{E}[S_{n,\infty}].
\end{equation*}

\paragraph{1. Finite Forest Error.}
Conditional on the data, the variance of the finite forest approximation scales as $1/B$. Thus, the first term is $O_p(\sqrt{n/s} \cdot B^{-1/2})$. Provided $B \gg n/s$ (Assumption regarding computational resources), this term vanishes.

\paragraph{2. Bias Term.}
The bias is $\sqrt{\frac{n}{s}} \mathbb{E}[S_{n,\infty}] = \sqrt{\frac{n}{s}} O(\epsilon(s))$ by Lemma B.1. Under the \textbf{undersmoothing condition} $\sqrt{n/s} \epsilon(s) \to 0$, this term vanishes asymptotically.

\paragraph{3. Sampling Variance (Main Term).}
The remaining term is the centered infinite forest score:
\begin{equation*}
G_n := \sqrt{\frac{n}{s}} (S_{n,\infty} - \mathbb{E}[S_{n,\infty}]) = \sqrt{\frac{n}{s}} \sum_{i=1}^n K_{n,\infty}(x, X_i) \zeta_i,
\end{equation*}
where $\zeta_i = \psi_{\tau}(Y_i, \theta_0)$ are zero-mean independent variables (conditional on $X$) with variance $\tau(1-\tau)$.
This term $G_n$ is a generalized complete U-statistic of rank $s$. We invoke the Central Limit Theorem for generalized U-statistics by \citet{Peng2022} [Theorem 1]. The variance of $G_n$ converges to:
\begin{equation*}
\text{Var}(G_n \mid \mathbf{X}) = \tau(1-\tau) \left( \frac{n}{s} \sum_{i=1}^n K_{n,\infty}(x, X_i)^2 \right) \xrightarrow{p} \tau(1-\tau) \eta(x),
\end{equation*}
by Lemma B.2. Since the weights are asymptotically negligible individually ($\max_i K_{ni} \to 0$) and the variance ratio condition holds (Lemma B.2), the Lindeberg condition is satisfied. Thus:
\begin{equation*}
G_n \xrightarrow{d} \mathcal{N}(0, \tau(1-\tau)\eta(x)).
\end{equation*}
Consequently, $W_n(u) \rightsquigarrow -u Z$, where $Z \sim \mathcal{N}(0, \tau(1-\tau)\eta(x))$.

\subsection{Convergence of the Quadratic Term ($Q_n$)}

We analyze the integral term $Q_n(u)$. Let $v_n = u\sqrt{s/n}$.
Recall the definition:
\begin{equation*}
Q_n(u) = \frac{n}{s} \sum_{i=1}^n K_{n,B}(x, X_i) \mathcal{I}_i(v_n),
\end{equation*}
where $\mathcal{I}_i(v) := \int_{0}^{v} (\mathbb{I}\{Y_i - q_{\tau}(x) \le z\} - \mathbb{I}\{Y_i - q_{\tau}(x) \le 0\}) dz$.

We evaluate the conditional expectation of this term given the covariates $\mathbf{X}$. We invoke \textbf{Lemma 3 (Second-Order Expansion)}, which establishes that under the Hölder continuity of the conditional density (Assumption 1), the expected integral term satisfies:
\begin{equation*}
\mathbb{E}[\mathcal{I}_i(v_n) \mid X_i] = \frac{1}{2} f_{Y|X}(q_{\tau}(x) \mid X_i) v_n^2 + R_i(v_n),
\end{equation*}
where the remainder term is bounded by $|R_i(v_n)| \le C |v_n|^{2+\eta}$ for some constants $C > 0$ and $\eta > 0$.

Substituting this expansion into the expectation of $Q_n(u)$:
\begin{align*}
\mathbb{E}[Q_n(u) \mid \mathbf{X}] &= \frac{n}{s} \sum_{i=1}^n K_{n,B}(x, X_i) \left( \frac{1}{2} f_{Y|X}(q_{\tau}(x) \mid X_i) v_n^2 + R_i(v_n) \right) \nonumber \\
&= \frac{1}{2} \underbrace{\frac{n}{s} \left( u \sqrt{\frac{s}{n}} \right)^2}_{= u^2} \sum_{i=1}^n K_{n,B}(x, X_i) f_{Y|X}(q_{\tau}(x) \mid X_i) \nonumber \\
&\quad + \underbrace{\frac{n}{s} \sum_{i=1}^n K_{n,B}(x, X_i) R_i(v_n)}_{\text{Remainder Term}}.
\end{align*}

\paragraph{1. Main Quadratic Term.}
The first term simplifies to $\frac{1}{2} u^2 \sum_{i=1}^n K_{n,B}(x, X_i) f_{Y|X}(q_{\tau}(x) \mid X_i)$.
By the consistency of the forest weights (concentrating on $x$) and the continuity of the conditional density $f_{Y|X}(\cdot \mid \cdot)$, the weighted average converges in probability:
\begin{equation*}
\sum_{i=1}^n K_{n,B}(x, X_i) f_{Y|X}(q_{\tau}(x) \mid X_i) \xrightarrow{p} f_{Y|X}(q_{\tau}(x) \mid x).
\end{equation*}

\paragraph{2. Remainder Term.}
We bound the remainder term using the order of $v_n$:
\begin{align*}
\left| \frac{n}{s} \sum_{i=1}^n K_{n,B}(x, X_i) R_i(v_n) \right| &\le \frac{n}{s} \sum_{i=1}^n K_{n,B}(x, X_i) C |v_n|^{2+\eta} \nonumber \\
&= \frac{n}{s} \cdot C \left| u \sqrt{\frac{s}{n}} \right|^{2+\eta} \sum_{i=1}^n K_{n,B}(x, X_i) \nonumber \\
&= C |u|^{2+\eta} \frac{n}{s} \left( \frac{s}{n} \right)^{1 + \eta/2} \cdot 1 \nonumber \\
&= C |u|^{2+\eta} \left( \frac{s}{n} \right)^{\eta/2}.
\end{align*}
Since $s/n \to 0$ and $\eta > 0$, this term converges to 0 for any fixed $u$.

\paragraph{3. Variance of $Q_n(u)$.}
Similar to the analysis in Appendix C, the variance of $Q_n(u)$ involves higher powers of $s/n$ compared to the mean and vanishes asymptotically.

Thus, by Markov's inequality, $Q_n(u)$ converges in probability to the deterministic limit:
\begin{equation*}
Q_n(u) \xrightarrow{p} \frac{1}{2} f_{Y|X}(q_{\tau}(x) \mid x) u^2.
\end{equation*}

\subsection{Conclusion via Argmin Continuous Mapping Theorem}

Combining the results, the process $Z_n(u)$ converges in distribution to a limit process $Z(u)$:
\begin{equation*}
Z_n(u) \rightsquigarrow Z(u) := -u \mathcal{N}(0, \tau(1-\tau)\eta(x)) + \frac{1}{2} f_{Y|X}(q_{\tau}(x) \mid x) u^2.
\end{equation*}
Since $Z_n(u)$ is convex and the limit process $Z(u)$ has a unique minimizer, the \textbf{Argmin Continuous Mapping Theorem} implies that the minimizer $\hat{u}_n$ converges to the minimizer of $Z(u)$.
\begin{equation*}
\hat{u}_n = \sqrt{\frac{n}{s}}(\hat{q}_{\tau}(x) - q_{\tau}(x)) \xrightarrow{d} \mathop{\mathrm{argmin}}_u Z(u) = \frac{\mathcal{N}(0, \tau(1-\tau)\eta(x))}{f_{Y|X}(q_{\tau}(x) \mid x)}.
\end{equation*}
This yields the asymptotic distribution:
\begin{equation*}
\sqrt{\frac{n}{s}}(\hat{q}_{\tau}(x) - q_{\tau}(x)) \xrightarrow{d} \mathcal{N}\left(0, \frac{\tau(1-\tau)\eta(x)}{f_{Y|X}(q_{\tau}(x) \mid x)^2}\right).
\end{equation*}
This completes the proof. \hfill \qed

\section{Proof of Theorem \ref{thm:vi_consistency}}

\begin{proof}
Let $v_{\tau}(S; Z_i) := \rho_{\tau}(Y_i - q_{\tau, -S}(X_i)) - \rho_{\tau}(Y_i - q_{\tau}(X_i))$ be the oracle loss difference. We decompose the estimation error into a main empirical process term and a remainder term arising from nuisance estimation:
\begin{equation*}
    \hat{V}_{\tau}(S) - V_{\tau}(S) = \underbrace{\left( \frac{1}{n} \sum_{i=1}^n v_{\tau}(S; Z_i) - V_{\tau}(S) \right)}_{A_n} + R_n.
\end{equation*}
The term $A_n$ is an average of i.i.d. zero-mean random variables (since $\mathbb{E}[v_{\tau}(S; Z)] = V_{\tau}(S)$). By the Central Limit Theorem, $A_n = O_p(n^{-1/2})$.

The remainder term $R_n$ captures the error due to estimating $q_{\tau}$ and $q_{\tau, -S}$:
\begin{align*}
    R_n &= \frac{1}{n} \sum_{i=1}^n \left[ \rho_{\tau}(Y_i - \hat{q}_{\tau, -S}(X_i)) - \rho_{\tau}(Y_i - q_{\tau, -S}(X_i)) \right] \\
        &\quad - \frac{1}{n} \sum_{i=1}^n \left[ \rho_{\tau}(Y_i - \hat{q}_{\tau}(X_i)) - \rho_{\tau}(Y_i - q_{\tau}(X_i)) \right] \\
        &:= R_{n, -S} - R_{n, \text{full}}.
\end{align*}
We analyze $R_{n, \text{full}}$ using Knight's Identity. Let $\Delta_i = \hat{q}_{\tau}(X_i) - q_{\tau}(X_i)$.
\begin{equation*}
    R_{n, \text{full}} = -\frac{1}{n} \sum_{i=1}^n \Delta_i \psi_{\tau}(Y_i, q_{\tau}(X_i)) + \frac{1}{n} \sum_{i=1}^n \int_0^{\Delta_i} (\bm{1}\{Y_i - q_{\tau}(X_i) \le z\} - \bm{1}\{Y_i - q_{\tau}(X_i) \le 0\}) dz.
\end{equation*}
\textbf{Linear Term:} Due to cross-fitting, $\Delta_i$ (estimated on $\mathcal{I}_{tr}$) is independent of $(Y_i, X_i) \in \mathcal{I}_{ev}$ conditional on $\mathcal{I}_{tr}$. Furthermore, the conditional score satisfies $\mathbb{E}[\psi_{\tau}(Y_i, q_{\tau}(X_i)) | X_i] = 0$. Thus, the linear term is a sum of mean-zero independent variables conditional on $\mathcal{I}_{tr}$. Its variance is of order $O_p(n^{-1} \mathbb{E}[\Delta_i^2])$. Since $\mathbb{E}[\Delta_i^2] \asymp s/n = n^{\beta-1}$, the linear term is $O_p(n^{-1/2} n^{(\beta-1)/2}) = O(n^{\beta/2 - 1}) = o_p(n^{-1/2})$, which is negligible.

\textbf{Quadratic Term:} By the second-order expansion (Lemma 5), the integral term behaves as $\frac{1}{2} f_{Y|X}(q_{\tau}(X_i)|X_i) \Delta_i^2$. The dominant order is determined by the mean squared error of the QRF estimator:
\begin{equation*}
    \frac{1}{n} \sum_{i=1}^n \Delta_i^2 = O_p\left( \frac{s}{n} \right) = O_p(n^{\beta-1}).
\end{equation*}
Thus, $R_{n, \text{full}} = O_p(n^{\beta-1})$. Similarly, $R_{n, -S} = O_p(n^{\beta-1})$. Combining these, the total remainder is $R_n = O_p(n^{\beta-1})$. The total error rate is therefore $\max(n^{-1/2}, n^{\beta-1})$.
\end{proof}

\section{Proof of Theorem \ref{thm:vi_bias_regime}}

\begin{proof}
We start with the decomposition derived in Theorem \ref{thm:vi_consistency}:
\begin{equation*}
    \hat{V}_{\tau}(S) - V_{\tau}(S) = A_n + R_n.
\end{equation*}
Multiply both sides by the scaling factor $n^{1-\beta}$:
\begin{equation*}
    n^{1-\beta}(\hat{V}_{\tau}(S) - V_{\tau}(S)) = n^{1-\beta} A_n + n^{1-\beta} R_n.
\end{equation*}

\textbf{Step 1: Vanishing Empirical Process Term.} \\
Recall that $A_n = O_p(n^{-1/2})$. Therefore:
\begin{equation*}
    n^{1-\beta} A_n = O_p(n^{1-\beta} \cdot n^{-1/2}) = O_p(n^{1/2 - \beta}).
\end{equation*}
Since $\beta > 1/2$, the exponent $1/2 - \beta < 0$, implying $n^{1-\beta} A_n \xrightarrow{P} 0$. Thus, the asymptotic behavior is entirely determined by the remainder term $R_n$.

\textbf{Step 2: Analysis of the Remainder Term.} \\
Recall the decomposition $R_n = R_{n, -S} - R_{n, \text{full}}$. We focus on $R_{n, \text{full}}$. Using the second-order expansion from Lemma \ref{lemma:second_order} and the vanishing linear term property (due to cross-fitting):
\begin{equation*}
    R_{n, \text{full}} = \frac{1}{n} \sum_{i=1}^n \frac{1}{2} f_{Y|X}(q_{\tau}(X_i)|X_i) (\hat{q}_{\tau}(X_i) - q_{\tau}(X_i))^2 + o_p(n^{\beta-1}).
\end{equation*}
Let $\Delta_i = \hat{q}_{\tau}(X_i) - q_{\tau}(X_i)$. We consider the conditional expectation of the quadratic term given the training data $\mathcal{I}_{tr}$. Since $\hat{q}_{\tau}$ is fixed given $\mathcal{I}_{tr}$:
\begin{equation*}
    \mathbb{E}\left[ \frac{1}{n} \sum_{i=1}^n f_i \Delta_i^2 \bigg| \mathcal{I}_{tr} \right] = \mathbb{E}_X \left[ f_{Y|X}(q_{\tau}(X)|X) (\hat{q}_{\tau}(X) - q_{\tau}(X))^2 \right].
\end{equation*}
From Theorem \ref{thm:normality}, the scaled estimation error converges in distribution:
\begin{equation*}
    \sqrt{\frac{n}{s}} (\hat{q}_{\tau}(X) - q_{\tau}(X)) \xrightarrow{d} \mathcal{N}(0, \sigma_{\tau}^2(X)).
\end{equation*}
Since the quantile regression forest estimator is formed by a convex combination of the observed outcomes $Y_i$, and assuming $Y$ has bounded support or finite moments, the sequence of scaled squared errors $\frac{n}{s}(\hat{q}_{\tau}(X) - q_{\tau}(X))^2$ is uniformly integrable. Consequently, convergence in distribution implies convergence of moments:
\begin{equation*}
    \mathbb{E}\left[ \left(\hat{q}_{\tau}(X) - q_{\tau}(X)\right)^2 \bigg| X \right] = \frac{s}{n} \sigma_{\tau}^2(X) + r_n(X),
\end{equation*}
where the remainder term satisfies $r_n(X) = o(s/n)$ uniformly over the support of $X$.

Substituting the asymptotic variance derived in Theorem \ref{thm:normality}, $\sigma_{\tau}^2(X) = \frac{\tau(1-\tau)\eta(X)}{f_{Y|X}(q_{\tau}(X)|X)^2}$, into the expectation term of $R_{n, \text{full}}$:
\begin{align*}
    \mathbb{E}_X \left[ f_{Y|X}(q_{\tau}(X)|X) (\hat{q}_{\tau}(X) - q_{\tau}(X))^2 \right] &= \mathbb{E}_X \left[ f_{Y|X}(q_{\tau}(X)|X) \left( \frac{s}{n} \sigma_{\tau}^2(X) + r_n(X) \right) \right] \\
    &= \frac{s}{n} \mathbb{E}_X \left[ f_{Y|X}(q_{\tau}(X)|X) \frac{\tau(1-\tau)\eta(X)}{f_{Y|X}(q_{\tau}(X)|X)^2} \right] \\
    & \phantom{==}+ \mathbb{E}_X [f_{Y|X}(q_{\tau}(X)|X) r_n(X)] \\
    &= \frac{s}{n} \tau(1-\tau) \mathbb{E}\left[ \frac{\eta(X)}{f_{Y|X}(q_{\tau}(X)|X)} \right] + o\left(\frac{s}{n}\right).
\end{align*}
By the Law of Large Numbers, the sample average in $R_{n, \text{full}}$ converges to this expectation. Thus, we obtain the expansion:
\begin{equation*}
    R_{n, \text{full}} = \frac{s}{n} \frac{\tau(1-\tau)}{2} \mathbb{E}\left[ \frac{\eta(X)}{f_{Y|X}(q_{\tau}(X)|X)} \right] + o_p\left(\frac{s}{n}\right).
\end{equation*}

Analogously, for the restricted model:
\begin{equation*}
    R_{n, -S} = \frac{s}{n} \frac{\tau(1-\tau)}{2} \mathbb{E}\left[ \frac{\eta_{-S}(X)}{f_{Y|X_{-S}}(q_{\tau, -S}(X)|X_{-S})} \right] + o_p\left(\frac{s}{n}\right).
\end{equation*}

\textbf{Step 3: Combining the Terms.} \\
Substituting these into the expression for $R_n$ and multiplying by $n^{1-\beta}$:
\begin{align*}
    n^{1-\beta} R_n &= n^{1-\beta} (R_{n, -S} - R_{n, \text{full}}) \\
    &= n^{1-\beta} \frac{s}{n} \left( \frac{\tau(1-\tau)}{2} \mathbb{E}\left[ \frac{\eta_{-S}}{f_{-S}} - \frac{\eta}{f} \right] \right) + o_p(1).
\end{align*}
Since $s \asymp n^\beta$, we have $n^{1-\beta} \cdot \frac{s}{n} \asymp n^{1-\beta} \cdot n^{\beta-1} = 1$. Therefore, the expression converges in probability to the constant $C_{\tau}(S)$.
\end{proof}

\section{Proof of Proposition \ref{prop:AsymptoticNormalityBiasCorrection} }
The decomposition is $\sqrt{n}(\tilde{V}_{\tau}(S) - V_{\tau}(S)) = \sqrt{n} A_n + \Delta_{bias} + \sqrt{n} \rho_n$.
We have already shown that $\sqrt{n} A_n \xrightarrow{d} \mathcal{N}(0, \sigma_S^2)$ and $\Delta_{bias} = o_p(1)$ under the stated condition on $\hat{C}_{\tau}$.
The critical term is the higher-order residual $\sqrt{n} \rho_n$. From Lemma 9 in Appendix C (Uniform Knight Expansion), the remainder of the quadratic expansion of the loss function satisfies:
\begin{equation*}
    |\rho_n| \le \frac{1}{n} \sum_{i=1}^n C |\hat{q}_{\tau}(X_i) - q_{\tau}(X_i)|^{2+\eta}.
\end{equation*}
Using the uniform convergence rate of the QRF estimator, $|\hat{q}_{\tau}(x) - q_{\tau}(x)| = O_p(n^{\frac{\beta-1}{2}})$, we evaluate the order of the scaled residual:
\begin{equation*}
    \sqrt{n} \rho_n = O_p\left( n^{1/2} \cdot \left( n^{\frac{\beta-1}{2}} \right)^{2+\eta} \right) = O_p\left( n^{\frac{1 + (\beta-1)(2+\eta)}{2}} \right).
\end{equation*}
For this term to converge to zero in probability, the exponent must be negative:
\begin{equation*}
    1 + (\beta-1)(2+\eta) < 0 \iff \beta < \frac{1+\eta}{2+\eta}.
\end{equation*}
Under this condition on $\beta$, the higher-order bias vanishes asymptotically, completing the proof.

\end{document}